\documentclass{article}

\PassOptionsToPackage{numbers,sort&compress}{natbib}
\usepackage[preprint]{neurips_2026}
\usepackage[utf8]{inputenc} % allow utf-8 input
\usepackage[T1]{fontenc}    % use 8-bit T1 fonts
\usepackage{hyperref}       % hyperlinks
\usepackage{url}            % simple URL typesetting
\usepackage{booktabs}       % professional-quality tables
\usepackage{amsfonts}       % blackboard math symbols
\usepackage{nicefrac}       % compact symbols for 1/2, etc.
\usepackage{microtype}      % microtypography
\usepackage{xcolor}         % colors
\usepackage{amsmath}
\usepackage{amssymb}
\usepackage{graphicx}   % \resizebox, \rotatebox
\usepackage{booktabs}   % \toprule, \midrule, \bottomrule, \cmidrule
\usepackage{multirow}   % \multirow
\usepackage{makecell}   % \makecell
\usepackage[table]{xcolor} % \cellcolor
\usepackage{tabularx}
\usepackage{wrapfig}
\usepackage{adjustbox}
\newcolumntype{Y}{>{\centering\arraybackslash}X}

\usepackage{amsthm}

\newtheorem{theorem}{Theorem}[section]

\theoremstyle{definition}

\newtheorem{assumption}[theorem]{Assumption}

\theoremstyle{remark}

\usepackage{algorithm}
\usepackage{algpseudocode}

\newcommand{\accval}[1]{\fontsize{9.8pt}{10pt}\selectfont #1}

\title{LASER: Loss-Aware Singular-value Decomposition and Rank Allocation for Efficient Low-Precision Vision-Language Models}

\author{
Haiyu Wang$^{1}$ \quad
Yutong Wang$^{1}$\thanks{Authors contributed equally; the order of authorship was assigned randomly.} \quad
Leshu Li$^{2}$\footnotemark[1] \quad
Yihui Ren$^{3}$ \quad 
Sai Qian Zhang$^{1,2}$ \\
$^1$Tandon School of Engineering, New York University \\ 
$^2$Courant Institute of Mathematical Sciences, New York University\\ 
$^3$Brookhaven National Laboratory\\
\texttt{\{hw3689, yw6594, ll5914\}@nyu.edu} \quad
\texttt{yren@bnl.gov} \quad
\texttt{sai.zhang@nyu.edu}
}

\begin{document}

\maketitle

\begin{abstract}
Vision-language models (VLMs) deliver strong multimodal reasoning capabilities, but their large computational cost and high parameter counts make deployment challenging on resource-constrained devices. Low-rank decomposition has emerged as a promising compression technique, yet existing methods often optimize local matrix reconstruction error, rely on uniform or heuristic rank allocation, and focus mainly on attention projections while leaving feed-forward networks underexplored. In this paper, we propose~\textit{LASER} (\textbf{L}oss-\textbf{A}ware \textbf{S}ingular-value d\textbf{E}composition and \textbf{R}ank allocation), a low-rank compression framework for efficient low-precision VLM inference. LASER derives a curvature-weighted SVD objective from a second-order approximation of the model loss and uses Kronecker-factored Fisher information to guide decomposition toward downstream performance rather than reconstruction alone. We further introduce a loss-aware cross-layer rank allocation strategy based on calibration gradients, enabling more effective parameter budgeting across layers. Finally, we extend low-rank compression to FFN layers through a hybrid scheme that combines SVD with quantization.
The evaluation results show that LASER achieves more than $2.3\times$ decoding speedup over previous work while preserving strong accuracy under low-precision inference.

\end{abstract}

\section{Introduction}
\begin{wrapfigure}{r}{0.7\textwidth}
    \centering
    \vspace{-10pt}
\includegraphics[width=0.7\textwidth]{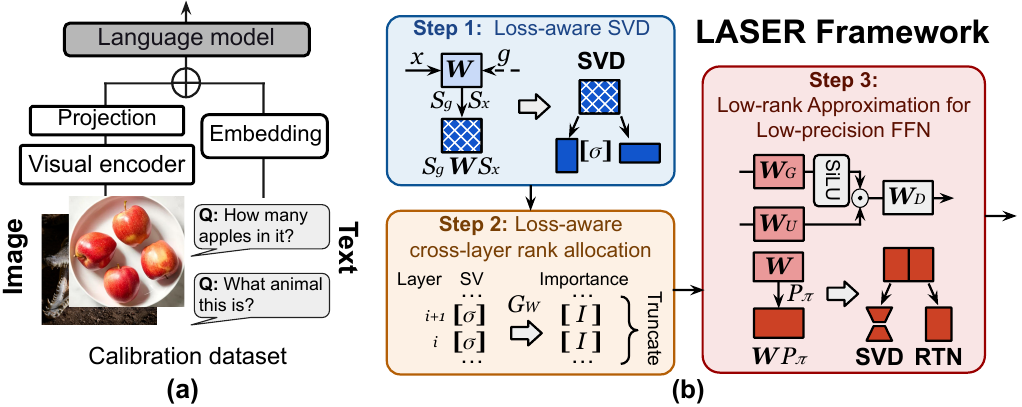}
    \vspace{-10pt}
    \caption{
    (a) A VLM model. (b) Overview of LASER Framework.
    }
    \label{fig:LASER_overall}
    \vspace{-5pt}
\end{wrapfigure}
Vision–language models (VLMs) have become an important research direction in modern AI because they bridge visual perception and language understanding. By reasoning over both image and text inputs, VLMs support a wide range of multimodal tasks, such as image captioning~\cite{hu2022scaling, li2022blip,li2023blip2, chen2022visualgpt, dzabraev2024vlrm}, visual question answering~\cite{chappuis2022prompt, bazi2023vision, bai2023qwen,wang2024surgical}, and multimodal semantic retrieval~\cite{li2024searchlvlms,sun2025leveraging}. Despite their strong performance, these models impose substantial implementation costs. Processing high-dimensional visual features together with textual context leads to intensive computation, while autoregressive decoding repeatedly accesses model states and generated tokens, creating severe memory-bandwidth pressure during inference and therefore high inference latency. On the other hand, the large size of VLMs leads to substantial storage overhead, further limiting their deployment on storage-constrained devices.

While quantization and pruning have been widely explored to reduce the computation and storage costs of large models~\cite{lin2024duquant,tseng2024quip,ashkboos2024quarot,xiang2024dfrot,wang2024qvlm}, low-rank decomposition has recently attracted increasing attention as another promising compression approach. By factorizing the query (Q), key (K), and value (V) projection matrices in self-attention blocks into low-rank components, prior studies have demonstrated notable reductions in parameter count and computational cost~\cite{wang2025svdllmv2,yuan2023asvd,wang2024svd,li2025adasvd,li2024svdqunat,chang2024palu,wang2025dobi,wang2025qsvd,wang2026wsvd}. However, several important directions remain underexplored. First, although individual optimization techniques have shown promising gains, it remains unclear whether combining them can yield further improvements. Second, rank allocation across VLM layers remains insufficiently studied, although 
different layers may exhibit distinct sensitivity to compression and therefore require non-uniform rank assignments. Finally, most existing efforts focus primarily on attention projections, while the feed-forward network (FFN) layers, which account for a large fraction of model parameters and storage cost, remain a critical but less explored target for low-rank compression.

To address these limitations, we systematically revisit low-rank compression for VLMs along these underexplored directions, and propose~\textit{\textbf{L}oss-\textbf{A}ware \textbf{S}ingular-value d\textbf{E}composition and \textbf{R}ank allocation} (LASER). Our contributions are summarized as follows:

\begin{itemize}
% \item LASER formulates SVD under a curvature-weighted objective derived from a second-order approximation of the model loss, using Kronecker-factored Fisher information to make the decomposition better aligned with downstream VLM performance rather than only minimizing reconstruction error.
\item LASER derives a loss-aware SVD objective from a second-order approximation of the model loss, using Kronecker-factored Fisher information to align low-rank decomposition with downstream VLM performance rather than only minimizing reconstruction error.
\item LASER provides an analysis of why singular values obtained from loss-aware SVD are not directly comparable across layers and heads, and introduces a calibration-gradient-based importance score for more reliable global rank allocation.
\item LASER extends low-rank compression to FFN layers through a hybrid SVD--quantization scheme that assigns each channel to the compression branch it can better tolerate.
\item We develop Triton kernels for the hybrid low-rank FFN and low-precision SA layers. The evaluation results show that LASER achieves a decoding speedup of $4.7\times$ over Flash Decoding and $2.3\times$ over WSVD, while preserving accuracy under low-precision inference.
\end{itemize}

\section{Related Work}
\label{sec:related_work}
\paragraph{Vision-Language Model}
VLMs extend LLMs to multimodal reasoning by incorporating visual inputs into language-conditioned generation~\cite{li2022blip, li2023blip2, liu2023visual-llava, 10.5555/3666122.3668264, beyer2024paligemma, grattafiori2024llama, wang2024qwen2}. By aligning visual features with the LLM semantic space, they support image captioning, visual question answering, grounding, document understanding, and video comprehension. Modern VLMs typically consist of a vision encoder, a projection module, and an autoregressive LLM, as seen in BLIP/InstructBLIP~\cite{li2022blip, li2023blip2}, LLaVA~\cite{liu2023visual-llava}, PaLI-Gemma~\cite{beyer2024paligemma}, Qwen-VL~\cite{wang2024qwen2}, and SmolVLM~\cite{marafioti2025smolvlm}. Despite their strong capabilities, VLMs are expensive to deploy because they inherit LLM computation and memory costs while adding vision encoders and long visual-token sequences. This challenge motivates compact VLMs~\cite{yuan2023tinygpt, zhou2024tinyllava, marafioti2025smolvlm} and compression techniques such as low-rank decomposition and quantization.

%Vision--Language Models (VLMs) extend LLMs to multimodal reasoning by incorporating visual inputs into language-conditioned generation~\cite{li2022blip, li2023blip2, liu2023visual-llava, 10.5555/3666122.3668264, beyer2024paligemma, grattafiori2024llama, wang2024qwen2}. By aligning visual features with the LLM semantic space, VLMs support tasks such as image captioning, visual question answering, grounding, document understanding, and video comprehension. Modern VLMs typically use a modular architecture, where a vision encoder extracts visual tokens, a projection module maps them into the LLM embedding space, and an autoregressive LLM jointly processes visual and textual tokens. This design underlies models such as BLIP/InstructBLIP~\cite{li2022blip, li2023blip2}, LLaVA~\cite{liu2023visual-llava}, PaLI-Gemma~\cite{beyer2024paligemma}, Qwen-VL~\cite{wang2024qwen2}, and SmolVLM~\cite{marafioti2025smolvlm}. Despite their strong capabilities, VLMs remain costly to deploy because they inherit the computation and memory demands of LLMs while adding vision encoders and long visual-token sequences. This makes deployment difficult on edge and real-time systems with limited memory, bandwidth, and latency budgets, motivating compact VLMs~\cite{yuan2023tinygpt, zhou2024tinyllava, marafioti2025smolvlm} and compression techniques such as low-rank decomposition and quantization.

\paragraph{Low-Rank Approximation for Large Models}
%\label{sec:rw-lra}
Low-rank approximation is a classical approach for reducing the storage and computational cost of large matrices. A standard formulation is Singular Value Decomposition (SVD)~\cite{jolliffe2016principal}, which factorizes a weight matrix $W \in \mathbb{R}^{m \times n}$ as $W = U \Sigma V^{T}$, where $U$ and $V$ contain the left and right singular vectors, and $\Sigma$ is a diagonal matrix of singular values. Retaining only the largest $r$ singular components gives the rank-$r$ approximation: $W \approx U_r \Sigma_r V_r^{T}$,
% {\small
% \begin{equation}
% \label{eqn:low_rank_svd}
%     W \approx U_r \Sigma_r V_r^{T}
% \end{equation}}
which can be implemented as two smaller matrices, $W \approx AB$ with $A= U_r \Sigma_r^{1/2}\in \mathbb{R}^{m \times r}$ and $B=\Sigma_r^{1/2}V_r^{T} \in \mathbb{R}^{r \times n}$. When $r \ll \min(m,n)$, this decomposition reduces both parameter count and arithmetic cost.
SVD-based compression has been extensively studied for large language models and vision--language models~\cite{noach2020compressing, hsu2022language, yuan2023asvd, wang2024svd, li2024svdqunat, chang2024palu, li2025adasvd, wang2025dobi, wang2025qsvd}. While early work directly applied vanilla SVD to pretrained weights~\cite{noach2020compressing}, later methods improve robustness by incorporating calibration or sensitivity information: FWSVD~\cite{hsu2022language} uses Fisher information, ASVD~\cite{yuan2023asvd} accounts for activation outliers, and SVD-LLM~\cite{wang2024svd} and Dobi-SVD~\cite{wang2025dobi} reduce layer-wise truncation error. Recent methods further refine rank allocation~\cite{li2025adasvd, wang2025qsvd} or combine low-rank decomposition with quantization~\cite{li2024svdqunat,wang2026wsvd}.

Beyond weight compression, low-rank approximation has also been applied to inference-time memory reduction and acceleration, particularly for KV cache compression~\cite{chang2024palu, yu2024effectively,wang2026wsvd}. By decomposing the key and value projection matrices, for example $W_K \approx A_KB_K$ and $W_V \approx A_VB_V$, the cache can keep lower-dimensional latent representations $C_K = xA_K$ and $C_V = xA_V$, where $x$ denotes the input. The full keys and values are then reconstructed only when needed as $K = C_KB_K$ and $V = C_VB_V$, avoiding storage of the full-dimensional KV states. This reduces KV cache footprint and memory traffic during autoregressive decoding, which can improve decoding throughput, especially in long-context generation and multimodal models with large token budgets.

\paragraph{Quantization for Large Models}
%\label{sec:rw-quant}
Quantization reduces the memory and inference cost of large models by mapping weights and activations to low-bit formats, with post-training quantization (PTQ) being especially attractive because it avoids retraining. Representative LLM PTQ methods include GPTQ~\cite{frantar2022gptq}, AWQ~\cite{lin2024awq}, and SmoothQuant~\cite{xiao2023smoothquant}, which respectively use second-order error compensation, activation-aware channel protection, and channel-wise scaling to improve robustness. Recent methods further improve low-bit quantization through learned parameters, rotations, and optimized kernels, such as OmniQuant~\cite{shao2023omniquant}, QuaRot~\cite{ashkboos2024quarot}, SpinQuant~\cite{liu2024spinquant}, and QServe~\cite{lin2025qserve}. For VLMs, quantization is more challenging due to modality-dependent activation distributions and different sensitivities between visual and textual tokens; Q-VLM~\cite{wang2024qvlm} and MBQ~\cite{li2024mbq} address these issues through cross-layer dependency modeling and modality-balanced calibration.

\paragraph{Kronecker-Factored Approximate Curvature}
%\label{sec:rw-kfac}
Second-order curvature provides a principled way to measure parameter sensitivity, but the full Hessian or Fisher matrix is prohibitively expensive to compute and store for large models. For a linear layer with $d_{\mathrm{in}}d_{\mathrm{out}}$ parameters, the corresponding Fisher block contains $O(d_{\mathrm{in}}^2d_{\mathrm{out}}^2)$ entries. K-FAC~\cite{martens2015optimizing} makes second-order curvature tractable by approximating this block with a Kronecker product of activation and gradient covariance factors, reducing the curvature representation to two much smaller covariance matrices. For a linear layer $y=xW$ with input activation $x$ and output gradient $g$, K-FAC approximates the Fisher information: $F_W \approx \mathbb{E}[g^\top g] \otimes \mathbb{E}[x^\top x]$, 
% \begin{equation}
% F_W \approx \mathbb{E}[g^\top g] \otimes \mathbb{E}[x^\top x]
% \end{equation}
thereby retaining structured curvature information at substantially lower cost.
Originally proposed for natural-gradient optimization~\cite{martens2015optimizing}, Kronecker-factored curvature has also been used for curvature-aware compression. EigenDamage~\cite{wang2019eigendamage} performs structured pruning in the Kronecker-factored eigenbasis, and the LLM Surgeon~\cite{van2023llm} scales such approximations to large language models for pruning and weight-update compensation. In this work, we use K-FAC as a tractable approximation of local loss curvature, enabling low-rank approximation to be guided by estimated model-loss sensitivity rather than only local reconstruction error.

\section{Method}
\label{sec:method}

\subsection{Preliminary}
\label{sec:loss-aware-lra}

Conventional SVD-based low-rank approximation is usually formulated as a local
matrix reconstruction problem. Given a weight matrix $W$, truncated SVD solves
$\arg\min_{\widehat{W}: \mathrm{rank}(\widehat{W})\le r}
\|W-\widehat{W}\|_F^2$, where $\widehat{W}$ denotes the compressed weight with low rank.
Since this objective ignores the input distribution, activation-aware methods~\cite{yuan2023asvd,wang2024svd,wang2025dobi}
instead minimize layer-wise output error on calibration activations, e.g.,
$\|X(W-\widehat{W})\|_F^2$.
Other approaches further incorporate gradients or Fisher to guide
truncation and rank allocation~\cite{hsu2022language,wang2025qsvd,wang2026wsvd}.
Nevertheless, the decomposition itself remains largely driven by local reconstruction criteria rather than the end-to-end training objective. In the following sections, we therefore derive a loss-aware low-rank objective from a local curvature approximation of the model loss as illustrated in Fig.~\ref{fig:loss-aware-svd}. Following K-FAC~\cite{martens2015optimizing} and prior curvature-aware compression methods~\cite{wang2019eigendamage,van2023llm}, we approximate the loss increase caused by compression using a second-order Kronecker-factored Fisher surrogate.
The derivation from a local quadratic loss approximation is provided in Appendix~\ref{sec:appx-loss-surrogate}.

\begin{assumption}[K-FAC loss surrogate for weight compression]
\label{assm:kfac-loss-surrogate}
Consider a linear layer $y=xW$, and let
$\Delta W=\widehat{W}-W$ denote the perturbation introduced by replacing the pretrained weight $W$ with its compressed approximation $\widehat{W}$. The compression-induced loss increase is estimated by:
{\small
\begin{equation}
\Delta \mathcal{L}
\approx
\frac{1}{2}
\mathrm{vec}(\Delta W)^{\top}
(G\otimes X)
\mathrm{vec}(\Delta W)
\label{eq:kfac-loss-surrogate}
\end{equation}
}
where $x$ is the input activation, $g=\nabla_y\ell$ is the output gradient of the per-sample loss, and $X=\mathbb{E}[x^{\top}x]$ and $G=\mathbb{E}[g^{\top}g]$ are the two K-FAC factors estimated on the calibration set.
\end{assumption}

\subsection{Loss-Aware Singular Value Decomposition}
\label{sec:loss-aware-svd} 
\begin{figure}
    \centering
    \includegraphics[width=1\linewidth]{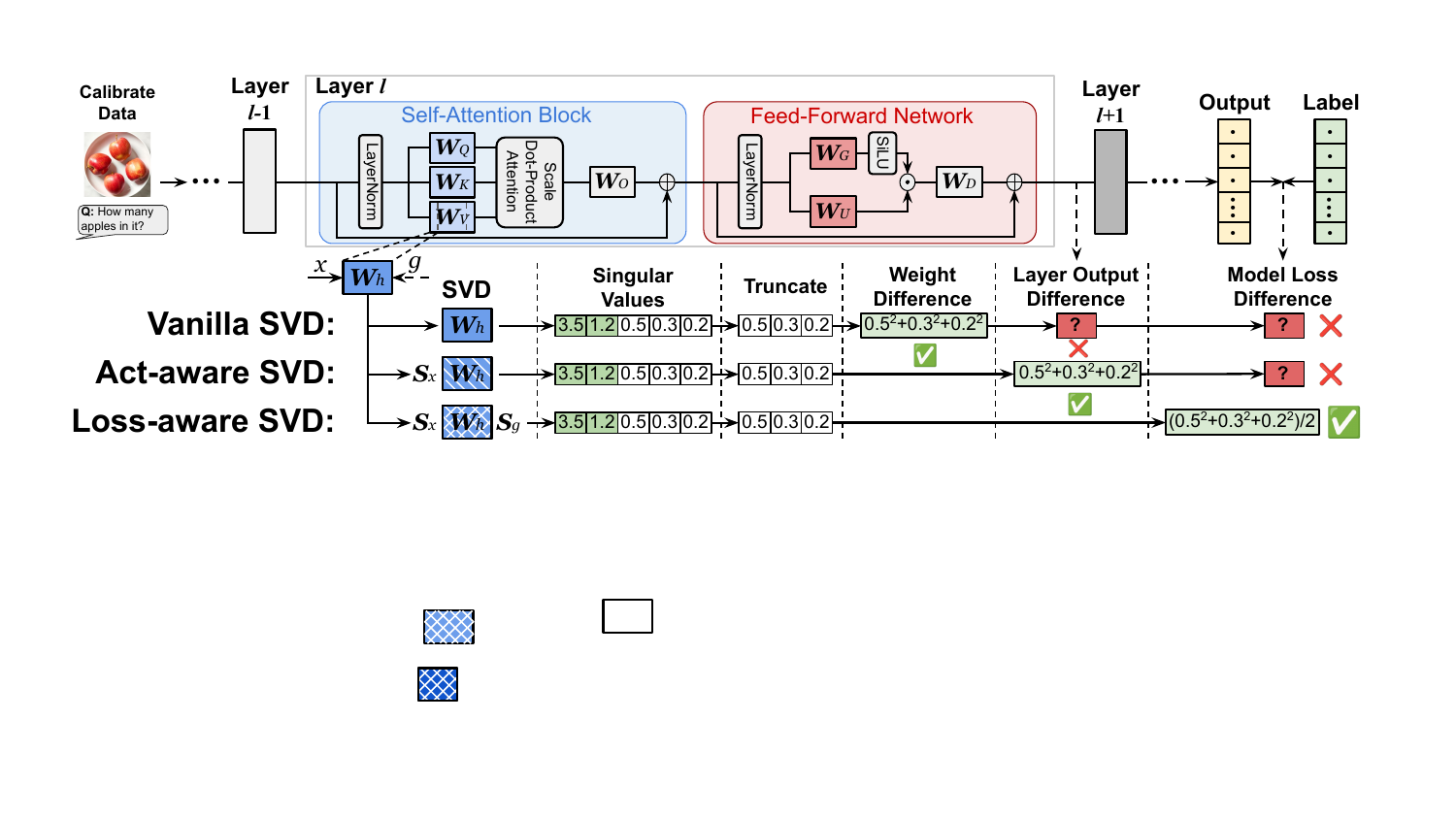}
    \vspace{-15pt}
    \caption{Vanilla SVD minimizes weight reconstruction error. Activation-aware SVD such as SVD-LLM~\cite{wang2024svd} reduces layer-wise output error with input activation statistics. Loss-aware SVD performs SVD in a curvature-weighted weight space to minimize the compression-induced model loss.}
    \label{fig:loss-aware-svd}
    \vspace{-13pt}
\end{figure}
Building on Assumption~\ref{assm:kfac-loss-surrogate}, we derive the low-rank approximation of a weight matrix that minimizes the estimated compression-induced loss increase. Consider a candidate linear layer with weight $W\in\mathbb{R}^{m\times n}$ and its rank-$r$ approximation $\widehat W_r$. The associated activation and gradient covariances are factorized as $X=S_x^{\top}S_x$ and $G=S_gS_g^{\top}$ with Cholesky factorization~\cite{higham1990analysis}, where $S_x$ and $S_g$ are assumed
nonsingular.

\begin{theorem}[Optimal loss-aware rank-$r$ weight approximation]
\label{thm:loss-aware-svd}
Under Assumption~\ref{assm:kfac-loss-surrogate}, define the curvature-weighted
weight matrix $\widetilde W=S_xWS_g$ and let
$\widetilde W=U\Sigma V^{\top}$ be its SVD, where
$\Sigma=\operatorname{diag}(\sigma_1,\ldots,\sigma_q)$ and
$q=\min\{m,n\}$. The rank-$r$ approximation of $W$ that minimizes the estimated compression-induced loss increase, and the corresponding truncation loss, are:
{\small
\begin{equation}
\widehat W_r^{\star}
=
S_x^{-1}U_r\Sigma_rV_r^{\top}S_g^{-1},
\qquad
\Delta\mathcal{L}_r
\approx
\frac{1}{2}\sum_{i=r+1}^{q}\sigma_i^2
\label{eq:loss-aware-svd-solution-compact}
\end{equation}
\vspace{-10pt}
}
\end{theorem}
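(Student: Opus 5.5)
The plan is to turn the Kronecker-weighted quadratic form in Assumption~\ref{assm:kfac-loss-surrogate} into a plain Frobenius norm in a transformed weight space. Once it is a Frobenius norm, the Eckart--Young--Mirsky theorem gives the optimal rank-$r$ approximation directly.

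\textbf{Step 1: rewrite the surrogate as a trace.} With the vectorization convention implicit in the assumption, so that $(G\otimes X)$ acts on $\mathrm{vec}(\Delta W)$ with $X$ on the input (row) side and $G$ on the output (column) side, the identity $\mathrm{vec}(A)^\top(G\otimes X)\mathrm{vec}(A)=\mathrm{tr}(A^\top X A G)$ gives
\[
\Delta\mathcal{L}\approx \tfrac12\,\mathrm{tr}\bigl(\Delta W^\top X\,\Delta W\, G\bigr).
\]
This identity is the standard consequence of $(G\otimes X)\mathrm{vec}(A)=\mathrm{vec}(XAG^\top)$ together with the symmetry of $G$. I would state the convention explicitly, since the ordering of the Kronecker factors depends on whether $\mathrm{vec}$ stacks rows or columns. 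This bookkeeping is the only place I expect any real care to be needed.

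\textbf{Step 2: factor the trace into a Frobenius norm.} Substitute the Cholesky factorizations $X=S_x^\top S_x$ and $G=S_gS_g^\top$. Cyclicity of the trace then gives
\[
\mathrm{tr}\bigl(\Delta W^\top S_x^\top S_x\Delta W S_g S_g^\top\bigr)=\|S_x\,\Delta W\,S_g\|_F^2 .
\]
Since $S_x\Delta W S_g=S_x\widehat W S_g-\widetilde W$, the objective becomes $\tfrac12\|\widetilde W-\widetilde{\widehat W}\|_F^2$, where $\widetilde{\widehat W}:=S_x\widehat W S_g$.

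\textbf{Step 3: apply Eckart--Young--Mirsky.} Because $S_x$ and $S_g$ are nonsingular, the map $\widehat W\mapsto S_x\widehat W S_g$ is a bijection on matrices and preserves rank. Minimizing over $\mathrm{rank}(\widehat W)\le r$ is therefore equivalent to minimizing over $\mathrm{rank}(\widetilde{\widehat W})\le r$.

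By Eckart--Young--Mirsky, the minimizer is $U_r\Sigma_rV_r^\top$, and the minimum value is $\sum_{i>r}\sigma_i^2$. Pulling back through the bijection gives $\widehat W_r^\star=S_x^{-1}U_r\Sigma_rV_r^\top S_g^{-1}$, and the loss is $\tfrac12\sum_{i=r+1}^q\sigma_i^2$, as claimed.

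The "$\approx$" in the theorem is inherited entirely from Assumption~\ref{assm:kfac-loss-surrogate}; the minimization of the surrogate itself is exact.
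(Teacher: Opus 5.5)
Your proposal is correct and takes essentially the same route as the paper. You reduce the K-FAC quadratic form to $\tfrac12\|S_x\Delta W S_g\|_F^2$, use nonsingularity of $S_x,S_g$ to transfer the rank constraint, and apply Eckart--Young--Mirsky in the whitened space; the only cosmetic difference is that you reach the weighted Frobenius norm via the trace identity and cyclicity, whereas the paper factors $G\otimes X=(S_g\otimes S_x^{\top})(S_g^{\top}\otimes S_x)$ with the mixed-product rule and then uses $\mathrm{vec}(ABC)=(C^{\top}\otimes A)\mathrm{vec}(B)$.
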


Theorem~\ref{thm:loss-aware-svd} directly leads to an implementable procedure.
Given a calibration set, we collect the input activations $x$ and output
gradients $g$ for each candidate linear layer, and estimate
$X=\mathbb{E}[x^{\top}x]$ and $G=\mathbb{E}[g^{\top}g]$. Then we compute factors $S_x$ and $S_g$ using Cholesky factorization~\cite{higham1990analysis}, 
%either Cholesky factorization~\cite{higham1990analysis} or eigen-decomposition~\cite{abdi2007eigen},
such that $X=S_x^{\top}S_x$ and $G=S_gS_g^{\top}$. Next, we form the transformed weight $\widetilde W=S_xWS_g$, perform truncated SVD to obtain $\widetilde W_r^{\star}=U_r\Sigma_rV_r^{\top}$, and map it back as:
{\small
\begin{equation}
\widehat W_r^{\star}
=
S_x^{-1}U_r\Sigma_rV_r^{\top}S_g^{-1}
=
A_rB_r,
\qquad
A_r=S_x^{-1}U_r\Sigma_r^{1/2},
\quad
B_r=\Sigma_r^{1/2}V_r^{\top}S_g^{-1}
\label{eq:loss-aware-svd-solution}
\end{equation}
\vspace{-10pt}
}

The corresponding truncation loss is:
{\small
$
\Delta\mathcal{L}_r
\approx
\frac{1}{2}
\left\|S_x(W-\widehat W_r^{\star})S_g\right\|_F^2
=
\frac{1}{2}\sum_{i=r+1}^{q}\sigma_i^2
\label{eq:loss-aware-tail-loss}
$}.
At inference, we do not materialize
$\widehat W_r^{\star}\in\mathbb{R}^{m\times n}$; instead, we replace $xW$ with
$xA_rB_r$, where $A_r\in\mathbb{R}^{m\times r}$ and
$B_r\in\mathbb{R}^{r\times n}$. This reduces the parameter count and dominant multiply-accumulation (MAC) cost from $mn$ to $r(m+n)$, giving the compression ratio $\rho_1=\frac{r(m+n)}{mn}$. 
% {\small
% \begin{equation}
% \rho_1
% =
% \frac{r(m+n)}{mn}
% \label{eq:svd-param-mac-ratio}
% \end{equation}
% }
Thus, when $\rho_1<1$, or equivalently $r<mn/(m+n)$, the low-rank form reduces parameter count and computational cost.

\subsection{Loss-Aware Cross-Layer Rank Allocation}
\label{sec:rank-alloc}

Theorem~\ref{thm:loss-aware-svd} shows that the squared singular values of the curvature-weighted matrix estimate the loss increase caused by truncating the corresponding singular components. For layer $\ell$, let $\sigma_{\ell,i}$ denote the $i$-th singular value after loss-aware SVD. This gives a natural intuition for rank allocation: singular components with larger $\sigma_{\ell,i}^2$ should be preserved first, and
layers or heads with more large singular values should receive more ranks.
However, using these K-FAC-based singular values for global rank allocation assumes that values computed from different layers or attention heads are directly comparable. In other words, a larger $\sigma_{\ell,i}$ in one layer $\ell$ should correspond to a
larger loss increase than a smaller $\sigma_{\ell',j}$ in another layer $\ell'$. This comparability is not guaranteed in practice. K-FAC estimates curvature separately for each layer or attention head, so the loss estimate in Assumption~\ref{assm:kfac-loss-surrogate} may differ from the empirical-Fisher estimate by layer- or head-dependent scaling factors.
In this case, the same value of $\sigma_{\ell,i}$ may correspond to different loss increases in different layers or heads. Although $\sigma_{\ell,i}$ is meaningful for local truncation within the same layer or head, it may be unreliable for ranking components globally across layers and heads. 
% To examine this issue, we compare the difference of average curvature magnitude $\eta_\ell$ given by  empirical Fisher with that given by its K-FAC approximation \textcolor{blue}{for layer $\ell$} 
To examine this issue, we use the trace ratio $\eta_\ell$ between the empirical-Fisher block and its K-FAC approximation for layer $\ell$ to indicate K-FAC's estimation bias:
%(\textcolor{red}{what is $\eta_\ell, F_\ell, G_\ell, X_\ell, G_\ell$? Have you ever defined them? Why Tr(.) is used? Is it used to estimated empirical fisher score?})
{\small
\begin{equation}
\eta_\ell
\triangleq
\frac{\operatorname{Tr}(F_\ell)}
{\operatorname{Tr}(K_\ell)}
=
\frac{\operatorname{Tr}(F_\ell)}
{\operatorname{Tr}(G_\ell\otimes X_\ell)}
=
\frac{\operatorname{Tr}(F_\ell)}
{\operatorname{Tr}(G_\ell)\operatorname{Tr}(X_\ell)}
\label{eq:eta-scale-ratio}
\end{equation}
}

\begin{wrapfigure}{r}{0.25\textwidth}
    \centering
    \vspace{-20pt}
    \includegraphics[width=0.25\textwidth]{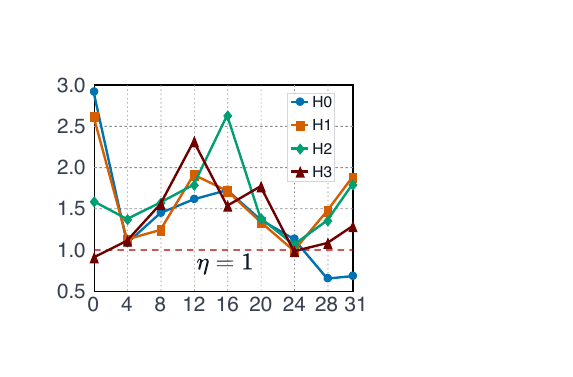}
    \vspace{-15pt}
    \caption{Values of $\eta_{\ell,h}$ across 32 layers (4 heads illustrated).}
    \label{fig:eta-diagnostic}
    \vspace{-15pt}
\end{wrapfigure}
where $F_\ell$ is the empirical-Fisher block of layer $\ell$, $K_\ell=G_\ell\otimes X_\ell$ is its K-FAC approximation, and $X_\ell$ and $G_\ell$ are the activation-side and gradient-side K-FAC factors of layer $\ell$. We provide a formal derivation of this trace ratio and its connection to the
K-FAC approximation error in Appendix~\ref{sec:appx-scale-diag}.
$\eta_\ell>1$ means that K-FAC underestimates the average curvature measured by the empirical Fisher for layer $\ell$, while $\eta_\ell<1$ means that it overestimates it. 
Large variation in $\eta_\ell$ indicates that K-FAC-based singular values are not consistently scaled across layers or heads.
We use this ratio only to expose this mismatch, rather than to rescale individual singular components. 

To empirically examine and quantify this mismatch, we analyze LLaVA-v1.5 7B~\cite{liu2023visual-llava} by computing this ratio for each Q-projection head. For the $h$-th Q head in layer $\ell$, we evaluate $\eta_{\ell,h}$ on the corresponding head-wise weight block.
As shown in Fig.~\ref{fig:eta-diagnostic}, $\eta_{\ell,h}$ varies substantially across both layers and heads. Some heads remain close to $\eta=1$, while others exceed $2$ and approach $3$ in several layers. Heads within the same layer also
show different degrees of mismatch. These results show that K-FAC-based singular values are not on a consistent scale across layers and heads, and therefore may not be directly compared for global rank allocation. This motivates our use of an empirical-Fisher importance score for more accurate component ranking.

\paragraph{Empirical-Fisher singular-value importance.}
Instead of globally ranking singular components by the K-FAC estimate $\sigma_{\ell,i}^2$, we extend the Importance Score~\cite{wang2025qsvd} to the loss-aware SVD setting. Specifically, we use calibration gradients to directly estimate the empirical-Fisher-based loss sensitivity of each singular component, providing a better calibrated and more loss-aware criterion for cross-layer rank allocation.
 Recall that %the curvature-weighted matrix is decomposed as 
$\widetilde{W}=S_xWS_g=U\Sigma V^{\top}$. For the $i$-th singular value of layer $\ell$, we define its importance score as:
{\small
\begin{equation}
\widehat{I}_{\ell,i}
=
\frac{1}{N}
\sum_{n=1}^{N}
\sigma_{\ell,i}^{2}
\left[
U_{\ell}^{\top}
S_{x,\ell}^{-\top}
G_{W_\ell}^{(n)}
S_{g,\ell}^{-\top}
V_{\ell}
\right]_{i,i}^{2}
\end{equation}}
where $G_{W_\ell}^{(n)}=\nabla_{W_\ell}\ell^{(n)}$ denotes the weight gradient of layer $\ell$ on the $n$-th calibration sample. This score measures how much each singular component contributes to the empirical-Fisher quadratic loss approximation, rather than relying on the K-FAC surrogate used to compute the SVD basis. It combines the singular value magnitude, captured by $\sigma_{\ell,i}^{2}$, with its alignment to the calibration gradients after mapping the basis back to the original parameter space. We provide the derivation in Appendix~\ref{sec:appx-importance-score}.

\paragraph{Global rank allocation.}
We allocate ranks by selecting singular components under a target parameter budget $B$. For the $\ell$-th candidate matrix $W_\ell\in\mathbb{R}^{m_\ell\times n_\ell}$, where $m_\ell$ and $n_\ell$ are its input and output dimensions, preserving one singular component will cost additional $\kappa_\ell=m_\ell+n_\ell$ parameters. Let
$\mathcal{S}$ denote the set of selected components, where
$(\ell,i)\in\mathcal{S}$ means that the $i$-th singular component of $W_\ell$ is retained. The global selection problem is:
%(\textcolor{red}{not clear, what is $B$? what is $m_{l}$, $n_{l}$, $\kappa_\ell$? Is $n$ mean $n-th$ calibration sample? what is rank-$1$? }):

\begin{equation}
\small
\mathcal{S}^{\star}
=
\arg\max_{\mathcal{S}}
\sum_{(\ell,i)\in\mathcal{S}}
\widehat{I}_{\ell,i}
\quad
\mathrm{s.t.}
\quad
\sum_{(\ell,i)\in\mathcal{S}}
\kappa_\ell
\le B
\label{eq:global-rank-allocation}
\end{equation}

The rank assigned to candidate layer $\ell$ is
$r_\ell=\left|\{i:(\ell,i)\in\mathcal{S}^{\star}\}\right|$.
If all candidate layer matrices have the same shape, the problem reduces to globally keeping the components with
the largest $\widehat{I}_{\ell,i}$. When candidate matrices have different shapes, retaining a singular component can incur different parameter costs. We therefore rank components by importance per parameter, $\widehat{I}_{\ell,i}/\kappa_\ell$, and greedily retain the highest-ranked components until the parameter budget $B$ is reached.

In LASER, we treat the per-head QKV projections in self-attention layers as candidate matrices. We apply the loss-aware SVD from Sec.~\ref{sec:loss-aware-svd} to each per-head projection and use the proposed rank allocation strategy to assign ranks across layers and heads. This compresses QKV weights and reduces projection cost. For K/V projections, the per-head low-rank form also lowers KV-cache memory traffic during autoregressive decoding. Overall, LASER accelerates inference while preserving accuracy by retaining the most important singular components.

\subsection{Quantization-Aware Whitening for SVD}
\label{sec:qa-whitening}
To further reduce computational cost, we apply linear quantization to both activations and low-rank weights. The loss-aware SVD objective in Sec.~\ref{sec:loss-aware-svd} uses the input-side K-FAC factor  $X=\mathbb{E}[x^{\top}x]$, which corresponds to a full-precision linear layer $y=xW$. In deployment, however, activation quantization replaces the input with $\tilde{x}=Q(x)$, where $Q(\cdot)$ is the activation quantization function with its scale and clipping range fixed after calibration. As a result, substituting $W$ with its low-rank approximation $\widehat{W}$ induces the quantized-output perturbation $\Delta y_q=\tilde{x}(\widehat{W}-W)$, rather than $x(\widehat{W}-W)$.
To align the whitening objective with the deployed low-precision layer, we adopt the~\textit{Quantization-Aware Whitening} (QAW) by estimating the input-side K-FAC factors using quantized calibration activations: $X_q=\mathbb{E}[\tilde{x}^{\top}\tilde{x}]$. We then factorize it as $X_q=S_{x,q}^{\top}S_{x,q}$ and construct the transformed weight $\widetilde W_q=S_{x,q}WS_g$. The remaining steps follow Sec.~\ref{sec:loss-aware-svd}: we perform truncated SVD on $\widetilde W_q$ and map the resulting low-rank factors back to the original weight space. After obtaining the low-rank factors, following prior work~\cite{wang2025qsvd}, we insert a rotation matrix~\cite{ashkboos2024quarot} between them before weight quantization, which preserves the low-rank product
while smoothing outliers and improving quantization robustness. When activation quantization is disabled, $Q$ reduces to the identity map, so $X_q=X$, and this formulation recovers the standard loss-aware SVD in Sec.~\ref{sec:loss-aware-svd}.  %(\textcolor{red}{should we mentioned hadamard? how the outlier is smoothed?})

\subsection{Low-rank Approximation for Low-precision Feed-Forward Networks}
\label{sec:mix-quant-svd}

% \begin{wrapfigure}{r}{0.63\linewidth}
% \vspace{-12pt}
% \begin{minipage}{0.98\linewidth}
{%\small
\begin{algorithm}[tb]
\caption{SVD-aware Hybrid FFN Compression}
\label{alg:hybrid-ffn}
\small
\begin{algorithmic}[1]
\Require FFN weights $W_G,W_U \in R^{ d_{\text{in}}\times d_\text{ffn}},W_D \in R^{ d_\text{ffn}\times d_{\text{out}}}$; SVD channel ratio $\gamma$; SVD retention ratio $\rho_{\rm ffn}$
\Ensure Quantized hybrid low-rank--dense FFN

\State $\bar W_G,\bar W_U \leftarrow
\mathrm{LowRank}(W_G;\rho_{\rm ffn}),\mathrm{LowRank}(W_U;\rho_{\rm ffn})$
\Comment{temporary SVD for channel scoring}

\For{$j=1,\ldots,d_{\rm ffn}$}
    \State $e_j \leftarrow
    \|W_G[:,j]-\bar W_G[:,j]\|_2^2
    +\|W_U[:,j]-\bar W_U[:,j]\|_2^2$
    \Comment{SVD error for each hidden channel}
\EndFor

\State $\mathcal{I}_S \leftarrow$ indices of the $\lfloor \gamma d_{\rm ffn}\rfloor$ smallest scores in $\{e_j\}$
\Comment{SVD-friendly channels}
\State $\mathcal{I}_D \leftarrow \{1,\ldots,d_{\rm ffn}\}\setminus\mathcal{I}_S$
\Comment{other channels kept dense}

\State $\pi \leftarrow \mathrm{Concat}(\mathcal{I}_S,\mathcal{I}_D)$
\Comment{new channel order with SVD channels first}
\State $P_\pi \leftarrow$ permutation matrix induced by this order
\State $W_G',W_U',W_D' \leftarrow W_GP_\pi,\; W_UP_\pi,\; P_\pi^\top W_D$
\Comment{preserve FFN output}

\State $[W_{G,S}',W_{G,D}'] \leftarrow \mathrm{Split}(W_G',|\mathcal{I}_S|)$, \quad
$[W_{U,S}',W_{U,D}'] \leftarrow \mathrm{Split}(W_U',|\mathcal{I}_S|)$
\Comment{separate SVD and dense blocks}

\State $(A_G,B_G),(A_U,B_U) \leftarrow
\mathrm{LowRank}(W_{G,S}';\rho_{\rm ffn}),\;
\mathrm{LowRank}(W_{U,S}';\rho_{\rm ffn})$
\Comment{SVD selected blocks}

\For{$T\in\{G,U\}$}
    \State $R_T \leftarrow \mathrm{Rotation}(\mathrm{rank}(A_T))$
    \Comment{orthogonal rotation: $R_T^\top R_T=I$}
    \State $(A_T,B_T) \leftarrow (A_TR_T,R_T^\top B_T)$
    \Comment{preserve product while smoothing factors}
    \State $(A_{T,q},B_{T,q}) \leftarrow \mathrm{RTN}(A_T,B_T)$
\EndFor

\State $W_{G,D,q}',W_{U,D,q}',W_{D,q}' \leftarrow
\mathrm{RTN}(W_{G,D}'),\mathrm{RTN}(W_{U,D}'),\mathrm{RTN}(W_D')$
\Comment{quantize dense branch and down projection}

\State \Return $\{A_{G,q},B_{G,q},A_{U,q},B_{U,q},
W_{G,D,q}',W_{U,D,q}',W_{D,q}'\}$
\end{algorithmic}
\end{algorithm}
% \vspace{-30pt}
}
% \end{minipage}
% \vspace{-12pt}
% \end{wrapfigure}

While Sections~\ref{sec:loss-aware-svd} and~\ref{sec:rank-alloc} focus on optimizing self-attention (SA) layers, this section extends the discussion to gated feed-forward networks (FFNs). FFN layers account for a large fraction of transformer parameters, but applying the same low-rank compression strategy uniformly to all FFN hidden channels can cause substantial accuracy degradation. Empirically, we find that decomposing the entire FFN weight matrices with SVD often introduces excessive degradation from the original model. 
To address this, we apply low-rank SVD only to selected FFN submatrices and introduce a SVD-aware procedure to identify SVD-friendly hidden channels. Afterward, we further quantize the resulting matrices to reduce the computational cost and parameter size of the FFN layers.

Algorithm~\ref{alg:hybrid-ffn} summarizes the proposed hybrid FFN compression procedure, which consists of four steps. First, we estimate the SVD-friendliness of each FFN hidden channel by applying a temporary truncated SVD to the gate and up projections, and select the channels with the lowest reconstruction error for final SVD. Second, we move the selected channels into a contiguous block by applying the same permutation to $W_G$ and $W_U$, and the inverse permutation to $W_D$. This preserves the SwiGLU FFN output because each gate/up hidden channel remains aligned with its corresponding row in the down projection. Third, we apply SVD only to the selected gate/up blocks, while keeping the remaining channels dense. Finally, we insert an orthogonal rotation between each pair of SVD factors and apply round-to-nearest (RTN) quantization to all resulting factors and dense blocks.

\subsection{System Implementation}
\label{sec:mixgaup-kernel}

% \begin{wrapfigure}{r}{0.5\linewidth}
%     \vspace{-12pt}
%     \centering
%     \includegraphics[width=\linewidth]{figures/mixffn_kernel.pdf}
%     \vspace{-15pt}
%     \caption{Mix SVD-Quantization FFN kernel (MixFFN) utilize fused kernel to reduce intermediate activations load/store and kernel launch time. The detailed implmentation includes: (a) naive pytorch implementation of Matmul-SwiGLU operations. (b) fused dense branch $W_{U,D}',W_{G,D}'$ with SwiGLU. (c) fused SVD branch down projection $A_U,A_G$. (d) fused SVD branch up projection $B_U,B_G$ with SwiGLU.}
%     \label{fig:mixffn_kernel}
%     \vspace{-10pt}
% \end{wrapfigure}
\begin{wrapfigure}{r}{0.4\linewidth}
    \vspace{-10pt}
    \centering
    \includegraphics[width=\linewidth]{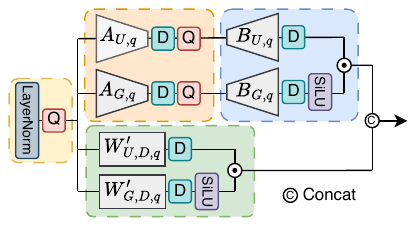}
    \vspace{-15pt}
    \caption{Fused kernels for the hybrid low-rank--dense FFN. $\odot$ denotes the element-wise product in SwiGLU, and Q/D denote quantization/dequantization.}
    \label{fig:mixffn_kernel}
    \vspace{-15pt}
\end{wrapfigure}
To realize the latency benefits of LASER’s low-rank and low-precision design in Sec.~\ref{sec:mix-quant-svd}, we implement the hybrid low-rank FFN using fused Triton kernels, as shown in Fig.~\ref{fig:mixffn_kernel}. This reduces kernel launches and intermediate memory traffic across gate/up projections, SwiGLU, and quantization/dequantization operations.

%\caption{Mix SVD-Quantization FFN kernel (MixFFN) utilize fused kernel to reduce intermediate activations load/store and kernel launch time. The detailed implmentation includes: (a) naive pytorch implementation of Matmul-SwiGLU operations. (b) fused dense branch $W_{U,D}',W_{G,D}'$ with SwiGLU. (c) fused SVD branch down projection $A_U,A_G$. (d) fused SVD branch up projection $B_U,B_G$ with SwiGLU.}

For the SVD-factorized branch in Sec.~\ref{sec:mix-quant-svd}, 
where $W_{G,S}'\approx A_{G,q}B_{G,q}$ and $W_{U,S}'\approx A_{U,q}B_{U,q}$, we first fuse the input-side projections $XA_{G,q}$ and $XA_{U,q}$ into a single kernel to generate the latent activations.
%$L_G$ and $L_U$. 
The second-stage projections with $B_{G,q}$ and $B_{U,q}$ are fused and then applied to the corresponding latent activations. For the dense branch, we likewise fuse the computation of $XW_{G,D,q}'$ and $XW_{U,D,q}'$ into one kernel.
Both branches use the same channel partition defined in Sec.~\ref{sec:mix-quant-svd}. Since the gate and up projections share aligned channel assignments, SwiGLU can be applied independently within each branch. We therefore fuse SiLU and element-wise multiplication after the projections, and then concatenate the SVD-factorized and dense outputs to obtain the final FFN activation.
For low-precision inference, we further fuse the per-token maximum reduction, quantization, and dequantization steps into the projection and nonlinear LayerNorm kernels, as shown by the Q/D nodes in Fig.~\ref{fig:mixffn_kernel}. This avoids extra passes over intermediate activations and reduces quantization-related overhead. Overall, the fused implementation preserves the original FFN computation while reducing memory movement, kernel launches, and quantization overhead. For SA layers, we extend the fused attention kernel~\cite{wang2026wsvd} to the low-precision setting to further improve inference efficiency.

\section{Evaluation}
\label{sec:eval}
% \subsection{Experimental Setup}
% \label{sec:exp-setup}
We conduct experiments on five representative vision–language models: LLaVA-v1.5 7B~\cite{liu2023visual-llava}, LLaVA-Next 7B, LLaVA-Next 13B, Qwen2-VL 7B~\cite{wang2024qwen2}, and SmolVLM 2B~\cite{marafioti2025smolvlm}. For calibration, we use 256 samples randomly drawn from the ScienceQA training split~\cite{lu2022learn_sqa}, following the procedures described in Sec.~\ref{sec:loss-aware-svd},~\ref{sec:rank-alloc} and~\ref{sec:qa-whitening}. Evaluation is conducted on three widely used benchmarks, ScienceQA-IMG~\cite{lu2022learn_sqa}, SEED-Bench-IMG~\cite{li2024seed} and MMBench~\cite{liu2024mmbench}, consistent with prior studies on VLMs such as LLaVA, using the VLMEvalKit~\cite{duan2024vlmevalkit} tool.
For comparison, LASER is benchmarked against several baselines, including SVD-based approaches (SVD-LLM~\cite{wang2024svd}, QSVD~\cite{wang2025qsvd}, WSVD~\cite{wang2026wsvd}) and quantization-based techniques (DuQuant~\cite{lin2024duquant}, QVLM~\cite{wang2024qvlm}). For SVD-LLM, QSVD and WSVD, we follow their official implementations and apply SVD independently to the QKV projection matrices to ensure a fair comparison with LASER, while leaving other linear layers unchanged. In addition, LASER introduces low-rank approximation for low-precision FFN under quantized settings. For FFN optimization, we set $\gamma=50\%$ and retain $\rho_{\rm ffn}=90\%$ of the selected SVD branch parameters for Algorithm~\ref{alg:hybrid-ffn}.
We denote the parameter ratio by $\rho_1$. For rank-$r$ SVD of a per-head projection $W_h\in\mathbb{R}^{E\times H}$, where $E$ is the hidden dimension and $H$ is the head dimension, $\rho_1=(E+H)r/(EH)$, while KV-cache size is reduced by the rank ratio $r/H$.
More results are shown in the Appendix~\ref{sec:appx-results}.

To isolate the impact of SVD from quantization, we introduce \textbf{LASER-noQ}, which applies only the loss-aware SVD and rank allocation techniques described in Sec.~\ref{sec:loss-aware-svd} and~\ref{sec:rank-alloc}. We compare it with SVD-LLM, QSVD-noQ and WSVD-noQ (unquantized version of QSVD/WSVD).
We then apply quantization-aware whitening (Sec.~\ref{sec:qa-whitening}), low-rank approximation for low-precision FFN (Sec.~\ref{sec:mix-quant-svd}) and full-model RTN quantization on top of LASER-noQ, benchmarking against DuQuant, QVLM, QSVD and WSVD.
All experiments are conducted on NVIDIA H100 GPUs. 

\subsection{Results in FP16}
\label{sec:result-fp16}
We first evaluate the FP16 performance of LASER-noQ under different rank budgets. To ensure fairness, we align the parameter ratio $\rho_1$ across all methods and compare their accuracy performance under the same $\rho_1$. For LASER, $\rho_1$ is defined in Sec.~\ref{sec:eval}, while for other SVD-based baselines, $\rho_1$ is defined as the proportion of parameters relative to the original model after SVD is applied. 

\begin{table*}[t]
    \newcommand{\bc}{\cellcolor{blue!10}}
    \newcommand{\accvalours}[1]{\fontsize{9.75pt}{10pt}\selectfont \textbf{#1}}
    \centering
    \small
    \caption{Accuracy of different methods under FP16 (detailed results in Appendix~\ref{sec:appx-results-fp16}).}
    %\vspace{-5pt}
    \resizebox{\linewidth}{!}{ %
    \begin{tabular}{l|l|c|c|c|c|c|c|c|c|c|c|c}
    \toprule
    
    \multirow{2}{*}{Acc.} & \multirow{2}{*}{Method} & \multicolumn{5}{c|}{ScienceQA-IMG $\uparrow$} & \multicolumn{5}{c|}{SEED-Bench $\uparrow$} & \multirow{2}{*}{Avg. $\uparrow$ } \\
    \cmidrule{3-12}
    & & $\rho_1: 90\%$ & $\rho_1: 80\%$ & $\rho_1: 70\%$ & $\rho_1: 60\%$ & $\rho_1: 50\%$ 
    & $\rho_1: 90\%$ & $\rho_1: 80\%$ & $\rho_1: 70\%$ & $\rho_1: 60\%$ & $\rho_1: 50\%$ \\
    
    \midrule

    \multirow{5.75}{*}{\rotatebox{90}{\makecell{SmolVLM\\2B}}}

    & SVD-LLM   
        & \accval{40.06\%} & \accval{17.20\%} & \accval{3.82\%} & \accval{0.64\%} & \accval{0.69\%}
        & \accval{32.49\%} & \accval{15.89\%} & \accval{4.60\%} & \accval{3.56\%} & \accval{1.23\%}
        & \accval{12.02\%}\\
    & QSVD-noQ   
        & \accval{77.00\%} & \accval{62.77\%} & \accval{42.59\%} & \accval{9.87\%} & \accval{0.20\%}
        & \accval{64.80\%} & \accval{50.46\%} & \accval{36.24\%} & \accval{3.60\%} & \accval{2.25\%}
        & \accval{34.98\%}\\
    & WSVD-noQ   
        & \accval{76.30\%} & \accval{71.74\%} & \accval{60.93\%} & \accval{39.51\%} & \accval{27.27\%}
        & \accval{65.78\%} & \accval{63.29\%} & \accval{54.45\%} & \accval{29.22\%} & \accval{27.35\%}
        & \accval{51.58\%}\\
    & \bc\textbf{LASER-noQ} 
        & \bc\accval{\textbf{84.58\%}} & \bc\accval{\textbf{83.69\%}} & \bc\accval{\textbf{82.00\%}} & \bc\accval{\textbf{81.11\%}} & \bc\accval{\textbf{78.14\%}}
        & \bc\accval{\textbf{68.54\%}} & \bc\accval{\textbf{68.10\%}} & \bc\accval{\textbf{67.62\%}} & \bc\accval{\textbf{67.12\%}} & \bc\accval{\textbf{66.17\%}}
        & \bc\accval{\textbf{74.51\%}}\\
    \cline{3-12}
    \rule{0pt}{2.75ex}
    & \textcolor{gray}{FP16} & \multicolumn{5}{c|}{\textcolor{gray}{Accuracy: 84.58\%}} & \multicolumn{5}{c|}{\textcolor{gray}{Accuracy: 68.47\%}} & \textcolor{gray}{\accval{76.53\%}}\\

    \midrule
    
    \multirow{5.75}{*}{\rotatebox{90}{\makecell{LLaVA-v1.5\\7B}}}   

    & SVD-LLM  
        & \accval{65.44\%} & \accval{63.71\%} & \accval{61.92\%} & \accval{57.41\%} & \accval{55.53\%}
        & \accval{57.89\%} & \accval{57.50\%} & \accval{55.33\%} & \accval{54.64\%} & \accval{55.31\%} 
        & \accval{58.47\%}\\
    & QSVD-noQ   
        & \accval{67.72\%} & \accval{\textbf{68.22\%}} & \accval{67.08\%} & \accval{65.05\%} & \accval{62.37\%} 
        & \accval{59.84\%} & \accval{59.07\%} & \accval{59.78\%} & \accval{59.00\%} & \accval{58.23\%} 
        & \accval{62.64\%}\\
    & WSVD-noQ   
        & \accval{\textbf{68.17\%}} & \accval{67.72\%} & \accval{\textbf{67.28\%}} & \accval{65.89\%} & \accval{65.49\%}
        & \accval{60.10\%} & \accval{\textbf{60.17\%}} & \accval{59.89\%} & \accval{\textbf{60.18\%}} & \accval{60.46\%}
        & \accval{63.54\%}\\
    % & \rowcolor{blue!10}\textbf{WSVD-noQ} 
    & \bc\textbf{LASER-noQ}
        & \bc\accval{68.12\%} & \bc\accval{68.02\%} & \bc\accval{67.03\%} & \bc\accval{\textbf{67.08\%}} & \bc\accval{\textbf{66.73\%}}
        & \bc\accval{\textbf{60.14\%}} & \bc\accval{60.00\%} & \bc\accval{\textbf{60.55\%}} & \bc\accval{59.82\%} & \bc\accval{\textbf{60.60\%}}
        & \bc\accval{\textbf{63.81\%}}\\
    \cline{3-12}
    \rule{0pt}{2.75ex}
    & \textcolor{gray}{FP16} & \multicolumn{5}{c|}{\textcolor{gray}{Accuracy: 68.01\%}} & \multicolumn{5}{c|}{\textcolor{gray}{Accuracy: 60.18\%}} & \textcolor{gray}{\accval{64.10\%}}  \\
    
    \midrule
    
    \multirow{5.75}{*}{\rotatebox{90}{\makecell{LLaVA-Next\\13B}}} %LLaVA-v1.5-13B

    & SVD-LLM    
        & \accval{72.53\%} & \accval{72.24\%} & \accval{71.74\%} & \accval{71.15\%} & \accval{70.55\%} 
        & \accval{70.76\%} & \accval{70.63\%} & \accval{70.25\%} & \accval{69.96\%} & \accval{69.58\%} 
        & \accval{70.94\%}\\
    & QSVD-noQ   
        & \accval{71.94\%} & \accval{72.14\%} & \accval{71.74\%} & \accval{72.14\%} & \accval{71.79\%} 
        & \accval{71.23\%} & \accval{71.02\%} & \accval{71.06\%} & \accval{70.92\%} & \accval{70.40\%} 
        & \accval{71.44\%}\\
    & WSVD-noQ   
        & \accval{72.88\%} & \accval{72.98\%} & \accval{\textbf{73.57\%}} & \accval{\textbf{73.48\%}} & \accval{73.28\%}
        & \accval{71.29\%} & \accval{71.17\%} & \accval{71.25\%} & \accval{70.95\%} & \accval{70.81\%}
        & \accval{72.17\%}\\
    & \bc\textbf{LASER-noQ} 
        & \bc\accval{\textbf{73.13\%}} & \bc\accval{\textbf{73.03\%}} & \bc\accval{73.38\%} & \bc\accval{73.13\%} & \bc\accval{\textbf{73.67\%}}
        & \bc\accval{\textbf{71.39\%}} & \bc\accval{\textbf{71.40\%}} & \bc\accval{\textbf{71.33\%}} & \bc\accval{\textbf{71.37\%}} & \bc\accval{\textbf{71.25\%}}
        & \bc\accval{\textbf{72.31\%}}\\
    \cline{3-12}
    \rule{0pt}{2.75ex}
    & \textcolor{gray}{FP16} & \multicolumn{5}{c|}{\textcolor{gray}{Accuracy: 73.23\%}} & \multicolumn{5}{c|}{\textcolor{gray}{Accuracy: 71.30\%}} & \textcolor{gray}{\accval{72.27\%}}\\

    \bottomrule
    \end{tabular}
    }
    \label{tab:svd-result}
    \vspace{-12pt}
\end{table*}
The evaluation results are summarized in Tab.~\ref{tab:svd-result}, with detailed results in Appendix~\ref{sec:appx-results-fp16}. Under the same parameter ratio $\rho_1$, LASER-noQ surpasses SVD-LLM, QSVD-noQ, and WSVD-noQ in accuracy in most cases. On large-scale models such as LLaVA-Next 13B, LASER-noQ incurs less than a $1\%$ accuracy drop on ScienceQA-IMG and SEED-Bench compared to the FP16 baseline. Notably, for LLaVA-Next 13B, when $\rho_1 = 50\%$, LASER-noQ even outperforms the FP16 model on ScienceQA-IMG, exceeding the FP16 baseline by more than $0.4\%$. One possible reason is that low-rank approximation may implicitly reduce hallucinations~\citep{liu2024survey}, although this hypothesis requires further validation.
Furthermore, LASER-noQ achieves consistently higher average accuracy across datasets and parameter ratios, with its advantage becoming more pronounced as $\rho_1$ decreases. For example, on SmolVLM, LASER-noQ maintains over $78\%$ accuracy on ScienceQA-IMG, while other baselines fail to produce usable results under the same parameter-ratio settings.

\subsection{Results in Quantized Settings}
\label{sec:result-quant}
\begin{wraptable}{r}{0.6\linewidth}
    \vspace{-15pt}
    \centering
    \newcommand{\bc}{\cellcolor{blue!10}}
    \newcommand{\accvalours}[1]{\fontsize{9.75pt}{10pt}\selectfont \textbf{#1}}
    \small
    \caption{Accuracy evaluation of different methods under low-precision on LLaVA-v1.5 7B, Next 7B and 13B.}%(\textcolor{red}{remove the results on v1.5 13B, using wraptable})}
    \resizebox{0.98\linewidth}{!}
    {%
    \begin{tabular}{l|c|c|c|c|c|c|c}
    \toprule
    
    \multirow{2}{*}{Method} 
    & \multicolumn{3}{c|}{ScienceQA-IMG $\uparrow$} 
    & \multicolumn{3}{c|}{SEED-Bench $\uparrow$} 
    & \multirow{2}{*}{Avg. $\uparrow$} \\
    
    \cmidrule{2-7}
    
    & v1.5 7B & Next 7B & Next 13B 
    & v1.5 7B & Next 7B & Next 13B 
    & \\
    
    \midrule
    
    DuQuant   
        & \accval{57.36\%} 
        & \accval{66.34\%} 
        & \accval{70.20\%} 
        & \accval{54.11\%} 
        & \accval{63.64\%} 
        & \accval{66.15\%} 
        & \accval{62.97\%} \\
        
    QVLM 
        & \accval{55.24\%} 
        & \accval{60.60\%} 
        & \accval{65.28\%} 
        & \accval{50.13\%} 
        & \accval{50.38\%} 
        & \accval{65.39\%} 
        & \accval{57.84\%} \\
        
    QSVD 
        & \accval{65.61\%} 
        & \accval{66.10\%} 
        & \accval{70.43\%} 
        & \accval{58.49\%} 
        & \accval{65.63\%} 
        & \accval{69.21\%} 
        & \accval{65.91\%} \\
        
    WSVD 
        & \accval{64.25\%} 
        & \accval{66.94\%} 
        & \accval{\textbf{73.08\%}} 
        & \accval{60.23\%} 
        & \accval{67.49\%} 
        & \accval{70.67\%} 
        & \accval{67.11\%} \\
        
    \rowcolor{blue!10}
    \textbf{LASER} 
        & \accval{\textbf{66.14\%}} 
        & \accval{\textbf{68.37\%}} 
        & \accval{72.11\%} 
        & \accval{\textbf{61.94\%}} 
        & \accval{\textbf{67.49\%}} 
        & \accval{\textbf{71.56\%}} 
        & \accval{\textbf{67.94\%}} \\
        
    \cline{2-8}
    
    \rule{0pt}{2.5ex}
    \textcolor{gray}{FP16} 
        & \textcolor{gray}{\accval{68.10\%}} 
        & \textcolor{gray}{\accval{69.60\%}} 
        & \textcolor{gray}{\accval{73.23\%}} 
        & \textcolor{gray}{\accval{60.18\%}} 
        & \textcolor{gray}{\accval{69.02\%}} 
        & \textcolor{gray}{\accval{71.30\%}} 
        & \textcolor{gray}{\accval{68.57\%}} \\
    
    \bottomrule
    \end{tabular}
    }
    \label{tab:svd-qat-result}
    \vspace{-10pt}
\end{wraptable}
% We present results under two weight–activation quantization configurations: W8A8 for LASER and WSVD with rank settings $\rho_{1}=50\%$, where SVD further reduces the
% full-dimensional A8 KV cache by about $50\%$, and W8A4 for all other baselines. 
We present results under two weight--activation quantization configurations.
LASER and WSVD use W8A8 with $\rho_{1}=50\%$: A8 halves the KV-cache storage, and SVD keeps nearly half of the K/V channels, yielding about $25\%$ of the FP16 full KV-cache size. Other baselines use W8A4 with a full-dimensional
KV cache.
This design keeps cache size and parameter size comparable across methods, 
while LASER’s FFN SVD (Sec.~\ref{sec:mix-quant-svd}) further reduces its parameter budget, ensuring fairness in comparison. 

%For QASVD and QSVD, we adopt $\rho_1 = 100\%$ truncation to match the W8 parameter budget of other baselines. Since these methods do not employ per-head SVD, they cannot leverage latent cache representations to accelerate inference and must store the full KV cache. As a result, they are also evaluated under the A4 setting to ensure comparable cache sizes. sÍ

For activation quantization, we adopt per-token symmetric quantization. For weight quantization, we employ RTN with per-channel symmetric scaling and a learnable clipping ratio, where the clipping value is selected via linear search to minimize squared error, following QuaRot~\citep{ashkboos2024quarot}. This quantization scheme is applied to the per-head Q/K/V weight matrices and all remaining attention and feed-forward modules, ensuring that the dominant matrix multiplications in each transformer block are executed in low precision. As shown in Tab.~\ref{tab:svd-qat-result}, LASER consistently outperforms the baselines in most cases, despite using a smaller parameter budget and the same cache size. On average across models and datasets, LASER incurs only a modest accuracy drop of just about $0.5\%$ relative to the FP16 baseline, while reducing cache size to $25\%$ of the FP16 model.

\subsection{Ablation Studies}
\label{sec:ablation}
\paragraph{Effectiveness of Loss-aware SVD}

\begin{wraptable}{r}{0.54\textwidth}
\vspace{-10pt}
\centering
\small
\caption{Results of loss-aware SVD ablation.}
%\vspace{-5pt}
\resizebox{\linewidth}{!}{ %
\begin{tabular}{c|c|c|c|c|c|c}
\toprule
\multirow{2}{*}{Method} & \multicolumn{5}{c|}{ScienceQA-IMG $\uparrow$} & \multirow{2}{*}{Avg. $\uparrow$} \\
\cmidrule{2-6}
& v1.5 7B & Next 7B & Next 13B & Q2 7B & S 2B \\
\midrule
\textcolor{gray}{FP16}
& \textcolor{gray}{\accval{68.10\%}} 
& \textcolor{gray}{\accval{69.60\%}} 
& \textcolor{gray}{\accval{73.23\%}} 
& \textcolor{gray}{\accval{84.38\%}} 
& \textcolor{gray}{\accval{84.58\%}} 
& \textcolor{gray}{\accval{75.98\%}} \\

Vanilla
& \accval{62.87\%} 
& \accval{67.72\%} 
& \accval{71.28\%} 
& \accval{53.25\%} 
& \accval{0.00\%} 
& \accval{51.02\%} \\

LASER
& \accval{\textbf{66.73\%}} 
& \accval{\textbf{70.80\%}} 
& \accval{\textbf{73.67\%}} 
& \accval{\textbf{75.41\%}} 
& \accval{\textbf{78.14\%}} 
& \accval{\textbf{72.95\%}} \\
\bottomrule
\end{tabular}
}
\label{tab:loss-aware-svd-ablation}
\vspace{-10pt}
\end{wraptable}

We evaluate loss-aware SVD against a vanilla SVD baseline using the same Importance Score-based rank allocation and compression ratio $\rho_1=50\%$ under FP16. The \textit{Vanilla} baseline applies SVD directly to the weights, while LASER performs SVD in the loss-aware transformed space from Sec.~\ref{sec:loss-aware-svd}. As shown in Tab.~\ref{tab:loss-aware-svd-ablation}, LASER consistently outperforms vanilla SVD across all models, improving average accuracy from $51.02\%$ to $72.95\%$, an $21.93\%$ gain. The improvement is especially large on Qwen2-VL 7B and SmolVLM 2B, where vanilla SVD degrades severely, while LASER preserves much higher accuracy.

%We evaluate the effectiveness of loss-aware SVD by comparing it with a vanilla SVD baseline under the same Importance Score-based rank allocation strategy and compression ratio $\rho_1=50\%$. The \textit{Vanilla} baseline applies standard SVD to the weight matrices, while LASER performs SVD in the loss-aware transformed space described in Sec.~\ref{sec:loss-aware-svd}. As shown in Tab.~\ref{tab:loss-aware-svd-ablation}, LASER consistently outperforms vanilla SVD across all evaluated models. On average, LASER improves the accuracy from 54.17\% to 72.73\%, yielding an 18.56\% gain. The improvement is especially significant on Qwen2-VL 7B and SmolVLM 2B, where vanilla SVD suffers severe degradation under aggressive compression, while LASER preserves much higher accuracy.

% This demonstrates that loss-aware SVD provides a substantially better low-rank approximation under the same setting.  Moreover, LASER remains only 2.56\% below the FP16 baseline on average, demonstrating that loss-aware whitening provides a substantially better low-rank approximation than standard SVD under the same compression setting.

\paragraph{Effectiveness of Loss-aware Cross-layer Rank Allocation}

\begin{wraptable}{r}{0.4\textwidth}
\vspace{-10pt}
\centering
\small
\caption{Ablation of rank allocation.}
%\vspace{-5pt}
\resizebox{\linewidth}{!}{ %
\begin{tabular}{c|c|c|c|c}
\toprule
\multirow{2}{*}{Method} & \multicolumn{3}{c|}{ScienceQA-IMG $\uparrow$} & \multirow{2}{*}{Avg. $\uparrow$} \\
\cmidrule{2-4}
& v1.5 7B & Next 7B & Next 13B & \\
\midrule
\textcolor{gray}{FP16}
& \textcolor{gray}{\accval{68.10\%}} 
& \textcolor{gray}{\accval{69.60\%}} 
& \textcolor{gray}{\accval{73.23\%}} 
& \textcolor{gray}{\accval{70.31\%}} \\

Uniform
& \accval{58.06\%} 
& \accval{59.84\%} 
& \accval{60.63\%} 
& \accval{59.51\%} \\

SV-based
& \accval{59.05\%} 
& \accval{55.63\%} 
& \accval{68.22\%} 
& \accval{60.97\%} \\

LASER
& \accval{\textbf{66.14\%}} 
& \accval{\textbf{68.34\%}} 
& \accval{\textbf{72.11\%}} 
& \accval{\textbf{68.86\%}} \\
\bottomrule
\end{tabular}
}
\label{tab:rank-allocation-ablation}
\vspace{-13pt}
\end{wraptable}
We evaluate the loss-aware cross-layer rank allocation in Sec.~\ref{sec:rank-alloc} by comparing it against two allocation baselines under W8A8. The \textit{Uniform} baseline keeps the same number of top-$k$ singular components in each layer, without performing cross-layer rank allocation. The \textit{SV-based} baseline follows the naive singular-value-based strategy discussed in the beginning of Sec.~\ref{sec:rank-alloc}, which ranks components globally only by their singular values. As shown in Tab.~\ref{tab:rank-allocation-ablation}, LASER consistently outperforms both baselines across all evaluated models. On average, LASER improves over uniform allocation by $9\%$ and over SV-based allocation by nearly $8\%$. %The gains are especially clear on LLaVA-Next 7B and LLaVA-Next 13B, where the naive allocation baselines suffer from substantial accuracy degradation. Moreover, LASER achieves an average accuracy of 69.51\%, only 1.18\% below the FP16 baseline, demonstrating that loss-aware rank allocation effectively preserves model accuracy under the same compression setting.
Additional ablations in Appendix~\ref{sec:appx-results-ablation} further show
that QAW improves low-precision SVD, while SVD-aware permutation preserves accuracy under hybrid FFN compression.

\subsection{System Evaluation}
\label{sec:system-evaluation}
\begin{wrapfigure}{r}{0.7\linewidth}
    \vspace{-10pt}
    \centering
    \includegraphics[width=\linewidth]{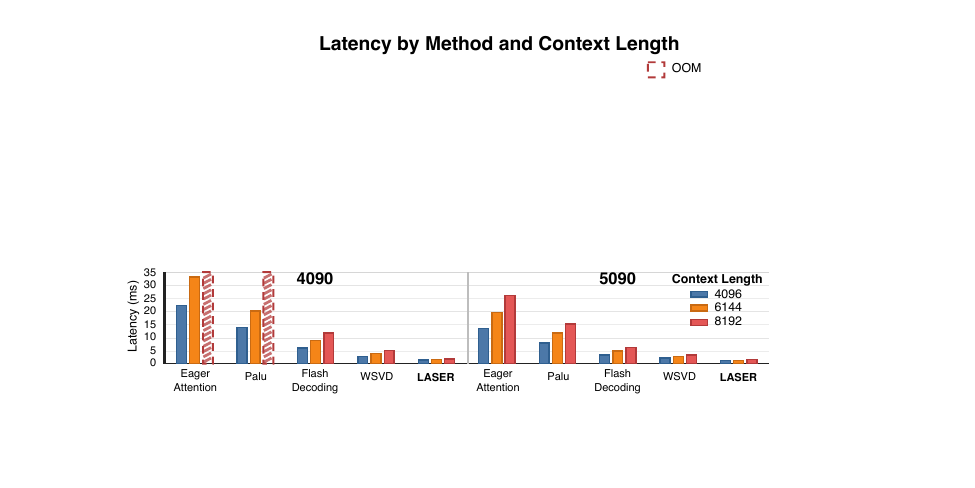}
    \vspace{-15pt}
    \caption{Latency evaluations on RTX 4090 and 5090. Dashed-line columns indicate out-of-memory cases.}
    \label{fig:eval-latency}
    %\vspace{-10pt}
\end{wrapfigure} 
We evaluate the system-level performance of LASER under the quantized setting in Sec.~\ref{sec:result-quant}, focusing on decoding acceleration. We measure layer-wise decoding latency of LLaVA-Next 7B across attention and FFN modules using our fused kernels from Sec.~\ref{sec:mixgaup-kernel} on RTX 4090 and 5090 GPUs. We compare against Eager Attention without Flash Decoding, Palu~\citep{chang2024palu}, Flash Decoding~\citep{dao2023flashdecoding}, and WSVD~\cite{wang2026wsvd}. Flash Decoding uses SDPA, while Palu and WSVD use their official 
implementations. Eager Attention and Flash Decoding operate on the full KV cache. For low-rank methods, Palu and LASER use $\rho_1=50\%$, reducing KV cache size by about $50\%$, 
while WSVD uses $\rho_1=60\%$ to match LASER more closely in accuracy. Both WSVD and LASER use INT8 W8A8 quantization, while others operate at FP16, and all measurements use batch size $32$.
As shown in Fig.~\ref{fig:eval-latency}, LASER consistently outperforms all baselines on both GPUs, achieving on average a $4.7\times$ speedup over Flash Decoding and a $2.3\times$ speedup over WSVD. The gains come from the fused kernels in Sec.~\ref{sec:mixgaup-kernel}: the W8A8 low-rank decoding kernel reduces attention-side materialization and KV-cache memory traffic, while the hybrid FFN kernel fuses low-rank reconstruction, SwiGLU, quantization, and accumulation. By reducing memory movement and kernel launch overhead, LASER delivers consistent decoding latency reduction across context lengths and GPUs while preserving accuracy.

\section{Conclusion and Limitation}
\label{sec:conclusion}
LASER provides a loss-aware low-rank compression framework for efficient VLM inference by aligning SVD with model-loss sensitivity, allocating ranks across layers and heads using calibration gradients, and extending compression to FFN layers with a hybrid SVD--quantization design. Currently, LASER is evaluated on representative VLMs and benchmarks, and broader validation on newer architectures, longer contexts, and more aggressive low-bit settings remains future work.

{
\newpage
\small
\bibliographystyle{plainnat}
\bibliography{references}
}

%%%%%%%%%%%%%%%%%%%%%%%%%%%%%%%%%%%%%%%%%%%%%%%%%%%%%%%%%%%%
\newpage
\appendix

\section{Technical appendices and supplementary material}
% Technical appendices with additional results, figures, graphs, and proofs may be submitted with the paper submission before the full submission deadline (see above). You can upload a ZIP file for videos or code, but do not upload a separate PDF file for the appendix. There is no page limit for the technical appendices. 

% Note: Think of the appendix as ``optional reading'' for reviewers. The paper must be able to stand alone without the appendix; for example, adding critical experiments that support the main claims to an appendix is inappropriate. 
%\section{Appendix}

\subsection{K-FAC Loss Surrogate for Weight Compression}
\label{sec:appx-loss-surrogate}

\paragraph{Derivation.}
By Taylor's theorem, the loss variation induced by $\Delta W$ satisfies:
{\small
\begin{equation}
\Delta \mathcal{L}
=
\mathcal{L}(W+\Delta W)-\mathcal{L}(W)  \\
=
\langle \nabla_W\mathcal{L}(W),\Delta W\rangle
+
\frac{1}{2}
\mathrm{vec}(\Delta W)^{\top}
H_W
\mathrm{vec}(\Delta W)
+
O(\|\Delta W\|_F^3)
\label{eq:taylor-expansion}
\end{equation}
}
where $H_W=\nabla_W^2\mathcal{L}(W)$. For a pretrained model close to a
stationary point and a small compression perturbation, the first-order term and
higher-order remainder are negligible, giving:
{\small
\begin{equation}
\Delta \mathcal{L}
\approx
\frac{1}{2}
\mathrm{vec}(\Delta W)^{\top}
H_W
\mathrm{vec}(\Delta W)
\label{eq:hessian-loss-surrogate}
\end{equation}
}

Since directly forming the Hessian block $H_W$ is prohibitively expensive for large models, following standard natural-gradient and second-order approximations, we replace
the Hessian block with the Fisher block~\cite{amari1998natural,martens2020new},
and then use K-FAC to obtain a tractable Kronecker-factored curvature
surrogate~\cite{martens2015optimizing}. For the linear layer $y=xW$, this gives:
{\small
\begin{equation}
\begin{aligned}
H_W
&\approx F_W
=
\mathbb{E}
\!\left[
\mathrm{vec}(\nabla_W\ell)
\mathrm{vec}(\nabla_W\ell)^{\top}
\right], \qquad
\nabla_W\ell
=x^{\top}g,
\qquad
\mathrm{vec}(\nabla_W\ell)=g^{\top}\otimes x^{\top}, \\
F_W
&=
\mathbb{E}
\!\left[
(g^{\top}\otimes x^{\top})(g\otimes x)
\right] =
\mathbb{E}
\!\left[
(g^{\top}g)\otimes(x^{\top}x)
\right]
\approx
\mathbb{E}[g^{\top}g]\otimes \mathbb{E}[x^{\top}x]
=
G\otimes X
\end{aligned}
\label{eq:fisher-linear-compact}
\end{equation}
}
Here $X=\mathbb{E}[x^{\top}x]$ and $G=\mathbb{E}[g^{\top}g]$ are estimated from calibration data. Substituting $H_W\approx G\otimes X$ into Eq.~\ref{eq:hessian-loss-surrogate} yields Eq.~\ref{eq:kfac-loss-surrogate}.

\subsection{Proof of Theorem~\ref{thm:loss-aware-svd}}
\begin{proof}
Let $\Delta W=W-\widehat W_r$. From
Assumption~\ref{assm:kfac-loss-surrogate},
using $X=S_x^{\top}S_x$ and $G=S_gS_g^{\top}$, together with the Kronecker-product identities $(A\otimes B)(C\otimes D)=(AC)\otimes(BD)$ and $\mathrm{vec}(ABC)=(C^{\top}\otimes A)\mathrm{vec}(B)$, we obtain:
{\small
\begin{equation}
\begin{aligned}
\Delta\mathcal{L}
&\approx
\frac{1}{2}\mathrm{vec}(\Delta W)^{\top}
(G\otimes X)
\mathrm{vec}(\Delta W) 
=
\frac{1}{2}\mathrm{vec}(\Delta W)^{\top}
(S_gS_g^{\top}\otimes S_x^{\top}S_x)
\mathrm{vec}(\Delta W) \\
&=
\frac{1}{2}\mathrm{vec}(\Delta W)^{\top}
(S_g\otimes S_x^{\top})(S_g^{\top}\otimes S_x)
\mathrm{vec}(\Delta W) 
=
\frac{1}{2}\left\|
(S_g^{\top}\otimes S_x)\mathrm{vec}(\Delta W)
\right\|_2^2 \\
&=
\frac{1}{2}\left\|
\mathrm{vec}(S_x\Delta W S_g)
\right\|_2^2
=
\frac{1}{2}\left\|
S_x\Delta W S_g
\right\|_F^2
\label{eq:kfac-to-weighted-frobenius}
\end{aligned}
\end{equation}
}
Therefore, minimizing the estimated loss increase in Assumption~\ref{assm:kfac-loss-surrogate} over rank-$r$ approximations is equivalent to:
{\small
\begin{equation}
\min_{\operatorname{rank}(\widehat W_r)\le r}
\left\|S_x(W-\widehat W_r)S_g\right\|_F^2
\label{eq:loss-aware-svd-objective}
\end{equation}
}
Define the curvature-weighted weight and rank-$r$ approximation as $\widetilde W=S_xWS_g$ and $\widetilde W_r=S_x\widehat W_rS_g$. Since $S_x$ and $S_g$ are nonsingular, left and right multiplication by them is 
% one-to-one and 
rank-preserving. Thus Eq.~\ref{eq:loss-aware-svd-objective} reduces to the standard Frobenius-norm rank-$r$ approximation problem:
{\small
\begin{equation}
\min_{\operatorname{rank}(\widetilde W_r)\le r}
\left\|\widetilde W-\widetilde W_r\right\|_F^2
\label{eq:standard-rank-r-proof}
\end{equation}
}
By the Eckart--Young--Mirsky theorem~\cite{eckart1936approximation,mirsky1960symmetric},
the optimal solution is the truncated SVD
$\widetilde W_r^{\star}=U_r\Sigma_rV_r^{\top}$. Mapping it back to the original weight space gives:
{\small
\begin{equation}
\widehat W_r^{\star}
=
S_x^{-1}\widetilde W_r^{\star}S_g^{-1}
=
S_x^{-1}U_r\Sigma_rV_r^{\top}S_g^{-1}
\end{equation}
}
Moreover, the optimal residual in the transformed space is
$\|\widetilde W-\widetilde W_r^{\star}\|_F^2=\sum_{i=r+1}^{q}\sigma_i^2$.
Substituting this residual into Eq.~\ref{eq:kfac-to-weighted-frobenius} gives:
$\Delta\mathcal{L}_r\approx\frac{1}{2}\sum_{i=r+1}^{q}\sigma_i^2$.
This proves Theorem~\ref{thm:loss-aware-svd}.
\end{proof}

\subsection{K-FAC Approximation Error Analysis}
\label{sec:appx-scale-diag}

We analyze why the local loss estimate
$\frac{1}{2}\sigma_{\ell,i}^{2}$ obtained in the loss-aware transformed space
is useful within one layer or head, but may not be directly comparable across
different layers or heads. Let $F_\ell$ denote the empirical Fisher matrix of
layer $\ell$, and let
\[
K_\ell = G_\ell \otimes X_\ell
\]
be the Kronecker-factored approximation used in
Assumption~\ref{assm:kfac-loss-surrogate}. We write the difference between the
empirical Fisher and this K-FAC approximation as:
\[
F_\ell = K_\ell + R_\ell,
\qquad
R_\ell \triangleq F_\ell - K_\ell
\]
Here, $R_\ell$ contains the curvature information that is not captured by the
Kronecker factorization.

For the $i$-th loss-aware singular value of layer $\ell$, $\sigma_{\ell,i}$, let:
\[
W_{\ell,i}^{\mathrm{svd}}
=
S_{x,\ell}^{-1}
\sigma_{\ell,i}u_{\ell,i}v_{\ell,i}^{\top}
S_{g,\ell}^{-1}
\]
denote the corresponding rank-one component mapped back to the original
weight space. Dropping this component induces the weight perturbation:
\[
\Delta W_{\ell,i}^{\mathrm{drop}}
=
-W_{\ell,i}^{\mathrm{svd}},
\qquad
z_{\ell,i}
=
\operatorname{vec}(\Delta W_{\ell,i}^{\mathrm{drop}}).
\]
Here, $W_{\ell,i}^{\mathrm{svd}}$ denotes the component itself, while
$\Delta W_{\ell,i}^{\mathrm{drop}}$ denotes the perturbation caused by
removing it. The minus sign does not affect the following quadratic loss
estimate.

For this discarded loss-aware singular component 
$z_{\ell,i}=\operatorname{vec}(\Delta W_{\ell,i}^{\mathrm{drop}})$, the
empirical-Fisher second-order loss estimate can be decomposed as:
{\small
\begin{equation}
\begin{aligned}
\Delta\mathcal{L}^{F}_{\ell,i}
&\approx
\frac{1}{2} z_{\ell,i}^{\top}F_\ell z_{\ell,i}  \\
&=
\frac{1}{2} z_{\ell,i}^{\top}K_\ell z_{\ell,i}
+
\frac{1}{2} z_{\ell,i}^{\top}(F_\ell-K_\ell)z_{\ell,i} \\
&\triangleq
T^{\mathrm{KFAC}}_{\ell,i}
+
T^{\mathrm{res}}_{\ell,i}
\end{aligned}
\label{eq:fisher-kfac-decomp}
\end{equation}
}
By Theorem~\ref{thm:loss-aware-svd}, the K-FAC-based term is:
\[
T^{\mathrm{KFAC}}_{\ell,i}
=
\frac{1}{2}
\|S_{x,\ell}\Delta W_{\ell,i}^{\mathrm{drop}}S_{g,\ell}\|_F^2
=
\frac{1}{2}\sigma_{\ell,i}^{2}
\]
Thus, $\frac{1}{2}\sigma_{\ell,i}^{2}$ is the loss increase predicted by the
K-FAC surrogate when the $i$-th loss-aware singular component is removed. This
estimate is suitable for comparing components within the same layer or head,
because all candidates are evaluated under the same local approximation.
However, when components from different layers or heads are ranked together,
the K-FAC surrogate in each layer or head may over- or under-estimate the
empirical-Fisher loss by a different amount.

To make this difference explicit, 
we compare the empirical-Fisher and K-FAC approximation using the trace ratio:
%we compare the empirical-Fisher
%curvature with the K-FAC curvature using the trace ratio:
{\small
\begin{equation}
\eta_\ell
=
\frac{\operatorname{Tr}(F_\ell)}
{\operatorname{Tr}(K_\ell)}
=
\frac{\operatorname{Tr}(F_\ell)}
{\operatorname{Tr}(G_\ell\otimes X_\ell)}
=
\frac{\operatorname{Tr}(F_\ell)}
{\operatorname{Tr}(G_\ell)\operatorname{Tr}(X_\ell)}
\label{eq:eta-diagnostic}
\end{equation}
}
The trace measures the total curvature mass of a positive semi-definite matrix.
Therefore, $\eta_\ell$ indicates whether the K-FAC surrogate tends to under- or
over-estimate the empirical-Fisher loss scale on average. Specifically,
$\eta_\ell>1$ means that K-FAC underestimates the total empirical-Fisher
curvature of layer $\ell$, while $\eta_\ell<1$ means that it overestimates it.

\paragraph{Derivation.}
To interpret this trace ratio, consider an isotropic perturbation $z=au$ in the
vectorized weight space of layer $\ell$, where $a>0$ and
$\mathbb{E}_{u}[uu^\top]=I/d_\ell$. For any symmetric curvature matrix $C$, the
quadratic loss surrogate along $z=au$ satisfies:
{\small
\begin{equation}
\mathbb{E}_{u}\!\left[\frac{1}{2}z^\top C z\right]
=
\frac{a^2}{2}\mathbb{E}_{u}[u^\top C u]
\label{eq:eta-indicator-proof-step1}
\end{equation}
}
Using $u^\top C u=\operatorname{Tr}(Cuu^\top)$ and
$\mathbb{E}_{u}[uu^\top]=I/d_\ell$, we have:
{\small
\begin{equation}
\mathbb{E}_{u}[u^\top C u]
=
\operatorname{Tr}\!\left(C\mathbb{E}_{u}[uu^\top]\right)
=
\frac{1}{d_\ell}\operatorname{Tr}(C)
\label{eq:eta-indicator-proof-step2}
\end{equation}
}
Therefore,
\[
\mathbb{E}_{u}\!\left[\frac{1}{2}z^\top C z\right]
=
\frac{a^2}{2d_\ell}\operatorname{Tr}(C)
\]
Applying this result to $C=F_\ell$ and $C=K_\ell$ cancels the common factor
$a^2/(2d_\ell)$ and gives:
{\small
\begin{equation}
\frac{
\mathbb{E}_{u}\!\left[\frac{1}{2}z^\top F_\ell z\right]
}{
\mathbb{E}_{u}\!\left[\frac{1}{2}z^\top K_\ell z\right]
}
=
\frac{\operatorname{Tr}(F_\ell)}
{\operatorname{Tr}(K_\ell)}
=
\eta_\ell
\label{eq:eta-scale-ratio}
\end{equation}
}
Similarly, applying the same result to $C=F_\ell-K_\ell$ gives the
direction-averaged relative error of the K-FAC curvature:
{\small
\begin{equation}
\frac{
\mathbb{E}_{u}\!\left[\frac{1}{2}z^\top (F_\ell-K_\ell) z\right]
}{
\mathbb{E}_{u}\!\left[\frac{1}{2}z^\top K_\ell z\right]
}
=
\eta_\ell - 1
\label{eq:eta-residual-ratio}
\end{equation}
}
Finally, since $K_\ell=G_\ell\otimes X_\ell$, the Kronecker trace identity gives:
\[
\operatorname{Tr}(K_\ell)
=
\operatorname{Tr}(G_\ell\otimes X_\ell)
=
\operatorname{Tr}(G_\ell)\operatorname{Tr}(X_\ell)
\]
which yields Eq.~\ref{eq:eta-diagnostic}.

This derivation shows that $\eta_\ell$ is a simple indicator of how the
K-FAC-based loss estimate compares with the empirical-Fisher loss estimate on
average. It is not a per-component correction, because the exact residual term
$T^{\mathrm{res}}_{\ell,i}$ depends on the direction of the discarded
component. However, large variation in $\eta_\ell$ across layers or heads means
that $\frac{1}{2}\sigma_{\ell,i}^{2}$ may not put all components on the same
loss scale. Therefore, directly sorting components from different layers or
heads by $\sigma_{\ell,i}^{2}$ can lead to inaccurate global rank allocation.

The trace-ratio indicator is also easy to compute in practice. Since the trace
only depends on diagonal entries, we do not need to materialize the full
empirical Fisher. Instead, we only need its diagonal entries, which can be
estimated from calibration gradients. For each calibration sample $n$, we run
a forward and backward pass, collect the per-sample gradient
$\nabla_{W_\ell}\ell_n$, square it elementwise, and accumulate it over the
calibration set:
\[
\mathbf f_{\ell}^{\mathrm{diag}}
\approx
\frac{1}{N}
\sum_{n=1}^{N}
\operatorname{vec}(\nabla_{W_\ell}\ell_n)
\odot
\operatorname{vec}(\nabla_{W_\ell}\ell_n)
\]
where $\mathbf f_{\ell}^{\mathrm{diag}}$ is a vector that estimates the
diagonal entries of the empirical Fisher, and $\odot$ denotes elementwise
multiplication. Summing its entries gives the empirical-Fisher trace:
{\small
\begin{equation}
\operatorname{Tr}(F_\ell)
=
\sum_j [F_\ell]_{j,j}
\approx
\sum_j [\mathbf f_{\ell}^{\mathrm{diag}}]_j
=
\frac{1}{N}
\sum_{n=1}^{N}
\left\|
\nabla_{W_\ell}\ell_n
\right\|_F^2 .
\label{eq:diag-fisher-trace}
\end{equation}
}
Thus, the practical computation only requires two trace values:
$\operatorname{Tr}(F_\ell)$ and $\operatorname{Tr}(K_\ell)$. The
empirical-Fisher trace $\operatorname{Tr}(F_\ell)$ is estimated by accumulating
squared per-sample calibration gradients, while the K-FAC trace
$\operatorname{Tr}(K_\ell)$ is computed directly from the Kronecker factors:
\[
\operatorname{Tr}(K_\ell)
=
\operatorname{Tr}(G_\ell\otimes X_\ell)
=
\operatorname{Tr}(G_\ell)\operatorname{Tr}(X_\ell)
\]
We estimate both quantities on the same calibration set used for loss-aware SVD
and rank allocation. This analysis supports the observation in
Sec.~\ref{sec:rank-alloc}: the local loss estimate
$\frac{1}{2}\sigma_{\ell,i}^{2}$ is useful for deciding which components to
truncate within a layer or head, but it is not always reliable for ranking
components globally. This motivates the empirical-Fisher importance score
derived in Appendix~\ref{sec:appx-importance-score}, which keeps the
loss-aware SVD basis while calibrating each singular component on a shared
empirical loss scale.

\subsection{Loss-Aware Rank Allocation}
\label{sec:appx-importance-score}

We provide the derivation of the calibration-gradient-based importance score
used for global rank allocation. For a target layer $\ell$, let the
loss-aware SVD of the curvature-weighted weight be
$\widetilde W_\ell=S_{x,\ell}W_\ell S_{g,\ell}
=U_\ell \Sigma_\ell V_\ell^\top$, where
$\Sigma_\ell=\operatorname{diag}(\sigma_{\ell,1},\ldots,\sigma_{\ell,q_\ell})$.
The rank-one component corresponding to the $i$-th singular value $\sigma_{\ell,i}$ in the original weight space is:
\[
W_{\ell,i}^{\mathrm{svd}}
=
S_{x,\ell}^{-1}
\sigma_{\ell,i}u_{\ell,i}v_{\ell,i}^{\top}
S_{g,\ell}^{-1}
\]
Under the local K-FAC surrogate, discarding this component gives the
loss estimate $\frac{1}{2}\sigma_{\ell,i}^{2}$. As discussed in
Sec.~\ref{sec:rank-alloc}, this quantity is meaningful for
local truncation within the same layer or head, but its scale may not be
directly comparable across different layers or heads.

To obtain a shared empirical scale for global rank allocation, we estimate the
loss increase of removing each loss-aware singular component under the empirical
Fisher. Recall that LASER performs SVD on the curvature-whitened weight
$\widetilde W_\ell=S_{x,\ell}W_\ell S_{g,\ell}
=U_\ell\Sigma_\ell V_\ell^\top$. We introduce a binary mask $m_{\ell,i}$ for
each singular component in this whitened space:
\[
\widetilde W_\ell(m)
=
\sum_{i=1}^{q_\ell}
m_{\ell,i}\sigma_{\ell,i}u_{\ell,i}v_{\ell,i}^{\top}
\]
Mapping it back to the original weight space gives:
\[
\widehat W_\ell(m)
=
S_{x,\ell}^{-1}\widetilde W_\ell(m)S_{g,\ell}^{-1}
=
\sum_{i=1}^{q_\ell}
m_{\ell,i}
S_{x,\ell}^{-1}
\sigma_{\ell,i}u_{\ell,i}v_{\ell,i}^{\top}
S_{g,\ell}^{-1}
\]
Therefore, removing the $i$-th loss-aware singular component corresponds to the
weight perturbation:
\[
\Delta W_{\ell,i}^{\mathrm{drop}}
=
-W_{\ell,i}^{\mathrm{svd}}
=
-
S_{x,\ell}^{-1}
\sigma_{\ell,i}u_{\ell,i}v_{\ell,i}^{\top}
S_{g,\ell}^{-1}
\]

Using the second-order loss approximation with the empirical Fisher
$F_\ell=\mathbb{E}[\mathrm{vec}(\nabla_{W_\ell}\ell)
\mathrm{vec}(\nabla_{W_\ell}\ell)^\top]$, the loss increase caused by this perturbation is:
\[
\Delta \mathcal{L}_{\ell,i}
\approx
\frac{1}{2}
\mathrm{vec}(\Delta W_{\ell,i}^{\mathrm{drop}})^\top
F_\ell
\mathrm{vec}(\Delta W_{\ell,i}^{\mathrm{drop}})
\]
Substituting the empirical Fisher form gives:
\[
\Delta \mathcal{L}_{\ell,i}
\approx
\frac{1}{2}
\mathbb{E}_{n}
\left[
\left\langle
\nabla_{W_\ell}\ell_n,
\Delta W_{\ell,i}^{\mathrm{drop}}
\right\rangle^2
\right]
\]
where $\ell_n$ is the calibration loss of the $n$-th sample. Since the sign of
$\Delta W_{\ell,i}^{\mathrm{drop}}$ is removed by the square, we obtain:
\[
\Delta \mathcal{L}_{\ell,i}
\approx
\frac{1}{2}
\sigma_{\ell,i}^{2}
\mathbb{E}_{n}
\left[
\left(
u_{\ell,i}^{\top}
S_{x,\ell}^{-T}
\nabla_{W_\ell}\ell_n
S_{g,\ell}^{-T}
v_{\ell,i}
\right)^2
\right]
\]
Equivalently, by defining the whitened-space gradient
\[
\nabla_{\widetilde W_\ell}\ell_n
=
S_{x,\ell}^{-T}
\nabla_{W_\ell}\ell_n
S_{g,\ell}^{-T}
\]
the score becomes
\[
s_{\ell,i}
=
\frac{1}{2}
\sigma_{\ell,i}^{2}
\mathbb{E}_{n}
\left[
\left(
u_{\ell,i}^{\top}
\nabla_{\widetilde W_\ell}\ell_n
v_{\ell,i}
\right)^2
\right]
\]
In practice, we estimate the expectation with the calibration set:
\[
s_{\ell,i}
=
\frac{1}{2N}
\sigma_{\ell,i}^{2}
\sum_{n=1}^{N}
\left(
u_{\ell,i}^{\top}
\nabla_{\widetilde W_\ell}\ell_n
v_{\ell,i}
\right)^2
\]
This score measures the empirical-Fisher loss increase of removing each
loss-aware singular component. The constant factor $\frac{1}{2}$ is shared by
all components and does not affect the global ranking, so we omit it and obtain
the importance score used in the main text:
\[
I_{\ell,i}
=
\frac{1}{N}
\sigma_{\ell,i}^{2}
\sum_{n=1}^{N}
\left(
u_{\ell,i}^{\top}
\nabla_{\widetilde W_\ell}\ell_n
v_{\ell,i}
\right)^2
\]
Compared with directly ranking by the K-FAC-predicted local value
$\sigma_{\ell,i}^2$, this score keeps the loss-aware SVD basis induced by
$S_{x,\ell}$ and $S_{g,\ell}$ while calibrating each component on a shared
empirical loss scale across layers and heads.

\subsection{Results}
\label{sec:appx-results}

\subsubsection{More Results in FP16}
\label{sec:appx-results-fp16}

\begin{table*}[h]
    \newcommand{\bc}{\cellcolor{blue!10}}
    \newcommand{\accvalours}[1]{\fontsize{9.75pt}{10pt}\selectfont \textbf{#1}}
    \centering
    \small
    \caption{Accuracy evaluation of different methods under FP16.}
    %\vspace{-5pt}
    \resizebox{\linewidth}{!}{ %
    \begin{tabular}{l|l|c|c|c|c|c|c|c|c|c|c|c}
    \toprule
    
    \multirow{2}{*}{Acc.} & \multirow{2}{*}{Method} & \multicolumn{5}{c|}{ScienceQA-IMG $\uparrow$} & \multicolumn{5}{c|}{SEED-Bench $\uparrow$} & \multirow{2}{*}{Avg. $\uparrow$ } \\
    \cmidrule{3-12}
    & & $\rho_1: 90\%$ & $\rho_1: 80\%$ & $\rho_1: 70\%$ & $\rho_1: 60\%$ & $\rho_1: 50\%$ 
    & $\rho_1: 90\%$ & $\rho_1: 80\%$ & $\rho_1: 70\%$ & $\rho_1: 60\%$ & $\rho_1: 50\%$ \\
    
    \midrule

    \multirow{5.75}{*}{\rotatebox{90}{\makecell{SmolVLM\\2B}}}

    & SVD-LLM   
        & \accval{40.06\%} & \accval{17.20\%} & \accval{3.82\%} & \accval{0.64\%} & \accval{0.69\%}
        & \accval{32.49\%} & \accval{15.89\%} & \accval{4.60\%} & \accval{3.56\%} & \accval{1.23\%}
        & \accval{12.02\%}\\
    & QSVD-noQ   
        & \accval{77.00\%} & \accval{62.77\%} & \accval{42.59\%} & \accval{9.87\%} & \accval{0.20\%}
        & \accval{64.80\%} & \accval{50.46\%} & \accval{36.24\%} & \accval{3.60\%} & \accval{2.25\%}
        & \accval{34.98\%}\\
    & WSVD-noQ   
        & \accval{76.30\%} & \accval{71.74\%} & \accval{60.93\%} & \accval{39.51\%} & \accval{27.27\%}
        & \accval{65.78\%} & \accval{63.29\%} & \accval{54.45\%} & \accval{29.22\%} & \accval{27.35\%}
        & \accval{51.58\%}\\
    & \bc\textbf{LASER-noQ} 
        & \bc\accval{\textbf{84.58\%}} & \bc\accval{\textbf{83.69\%}} & \bc\accval{\textbf{82.00\%}} & \bc\accval{\textbf{81.11\%}} & \bc\accval{\textbf{78.14\%}}
        & \bc\accval{\textbf{68.54\%}} & \bc\accval{\textbf{68.10\%}} & \bc\accval{\textbf{67.62\%}} & \bc\accval{\textbf{67.12\%}} & \bc\accval{\textbf{66.17\%}}
        & \bc\accval{\textbf{74.51\%}}\\
    \cline{3-12}
    \rule{0pt}{2.75ex}
    & \textcolor{gray}{FP16} & \multicolumn{5}{c|}{\textcolor{gray}{Accuracy: 84.58\%}} & \multicolumn{5}{c|}{\textcolor{gray}{Accuracy: 68.47\%}} & \textcolor{gray}{\accval{76.53\%}}\\

    \midrule
    
    \multirow{5.75}{*}{\rotatebox{90}{\makecell{Qwen2-VL\\7B}}}

    & SVD-LLM   
        & \accval{82.80\%} & \accval{81.11\%} & \accval{80.32\%} & \accval{77.34\%} & \accval{69.31\%}
        & \accval{74.74\%} & \accval{74.27\%} & \accval{71.85\%} & \accval{60.83\%} & \accval{44.36\%}
        & \accval{71.69\%}\\
    & QSVD-noQ   
        & \accval{84.09\%} & \accval{83.34\%} & \accval{82.40\%} & \accval{80.57\%} & \accval{\textbf{76.38\%}}
        & \accval{75.06\%} & \accval{74.97\%} & \accval{72.75\%} & \accval{70.63\%} & \accval{65.30\%}
        & \accval{76.55\%}\\
    & WSVD-noQ   
        & \accval{82.99\%} & \accval{81.90\%} & \accval{67.38\%} & \accval{49.68\%} & \accval{15.87\%}
        & \accval{75.74\%} & \accval{75.27\%} & \accval{74.53\%} & \accval{72.96\%} & \accval{62.16\%}
        & \accval{65.85\%}\\
    & \bc\textbf{LASER-noQ} 
        & \bc\accval{\textbf{85.23\%}} & \bc\accval{\textbf{84.28\%}} & \bc\accval{\textbf{83.74\%}} & \bc\accval{\textbf{82.00\%}} & \bc\accval{75.41\%}
        & \bc\accval{\textbf{76.22\%}} & \bc\accval{\textbf{76.24\%}} & \bc\accval{\textbf{75.96\%}} & \bc\accval{\textbf{75.14\%}} & \bc\accval{\textbf{69.95\%}}
        & \bc\accval{\textbf{78.42\%}}\\
    \cline{3-12}
    \rule{0pt}{2.75ex}
    & \textcolor{gray}{FP16} & \multicolumn{5}{c|}{\textcolor{gray}{Accuracy: 84.38\%}} & \multicolumn{5}{c|}{\textcolor{gray}{Accuracy: 76.29\%}} & \textcolor{gray}{\accval{80.33\%}}\\

    \midrule
    
    \multirow{5.75}{*}{\rotatebox{90}{\makecell{LLaVA-v1.5\\7B}}}   

    & SVD-LLM  
        & \accval{65.44\%} & \accval{63.71\%} & \accval{61.92\%} & \accval{57.41\%} & \accval{55.53\%}
        & \accval{57.89\%} & \accval{57.50\%} & \accval{55.33\%} & \accval{54.64\%} & \accval{55.31\%} 
        & \accval{58.47\%}\\
    & QSVD-noQ   
        & \accval{67.72\%} & \accval{\textbf{68.22\%}} & \accval{67.08\%} & \accval{65.05\%} & \accval{62.37\%} 
        & \accval{59.84\%} & \accval{59.07\%} & \accval{59.78\%} & \accval{59.00\%} & \accval{58.23\%} 
        & \accval{62.64\%}\\
    & WSVD-noQ   
        & \accval{\textbf{68.17\%}} & \accval{67.72\%} & \accval{\textbf{67.28\%}} & \accval{65.89\%} & \accval{65.49\%}
        & \accval{60.10\%} & \accval{\textbf{60.17\%}} & \accval{59.89\%} & \accval{\textbf{60.18\%}} & \accval{60.46\%}
        & \accval{63.54\%}\\
    % & \rowcolor{blue!10}\textbf{WSVD-noQ} 
    & \bc\textbf{LASER-noQ}
        & \bc\accval{68.12\%} & \bc\accval{68.02\%} & \bc\accval{67.03\%} & \bc\accval{\textbf{67.08\%}} & \bc\accval{\textbf{66.73\%}}
        & \bc\accval{\textbf{60.14\%}} & \bc\accval{60.00\%} & \bc\accval{\textbf{60.55\%}} & \bc\accval{59.82\%} & \bc\accval{\textbf{60.60\%}}
        & \bc\accval{\textbf{63.81\%}}\\
    \cline{3-12}
    \rule{0pt}{2.75ex}
    & \textcolor{gray}{FP16} & \multicolumn{5}{c|}{\textcolor{gray}{Accuracy: 68.01\%}} & \multicolumn{5}{c|}{\textcolor{gray}{Accuracy: 60.18\%}} & \textcolor{gray}{\accval{64.10\%}}  \\

    \midrule
    
    \multirow{5.75}{*}{\rotatebox{90}{\makecell{LLaVA-Next\\7B}}}
    
    & SVD-LLM
    & \accval{68.27\%} & \accval{67.92\%} & \accval{66.58\%} & \accval{66.39\%} & \accval{65.54\%}
    & \accval{68.50\%} & \accval{68.31\%} & \accval{67.65\%} & \accval{67.45\%} & \accval{66.28\%}
    & \accval{67.29\%}\\
    
    & QSVD-noQ
    & \accval{\textbf{70.10\%}} & \accval{69.16\%} & \accval{69.01\%} & \accval{68.27\%} & \accval{66.19\%}
    & \accval{68.86\%} & \accval{68.95\%} & \accval{68.44\%} & \accval{67.98\%} & \accval{67.27\%}
    & \accval{68.42\%}\\
    
    & WSVD-noQ
    & \accval{69.81\%} & \accval{69.56\%} & \accval{69.36\%} & \accval{68.22\%} & \accval{67.87\%}
    & \accval{69.18\%} & \accval{69.27\%} & \accval{\textbf{69.15\%}} & \accval{69.16\%} & \accval{68.59\%}
    & \accval{69.02\%}\\
    
    & \bc\textbf{LASER-noQ}
    & \bc\accval{69.51\%} & \bc\accval{\textbf{69.81\%}} & \bc\accval{\textbf{69.86\%}} & \bc\accval{\textbf{69.86\%}} & \bc\accval{\textbf{70.80\%}}
    & \bc\accval{\textbf{69.25\%}} & \bc\accval{\textbf{69.29\%}} & \bc\accval{69.13\%} & \bc\accval{\textbf{69.34\%}} & \bc\accval{\textbf{68.77\%}}
    & \bc\accval{\textbf{69.45\%}}\\
    
    \cline{3-12}
    \rule{0pt}{2.75ex}
    & \textcolor{gray}{FP16}
    & \multicolumn{5}{c|}{\textcolor{gray}{Accuracy: 69.60\%}}
    & \multicolumn{5}{c|}{\textcolor{gray}{Accuracy: 69.02\%}}
    & \textcolor{gray}{\accval{69.31\%}} \\
    \midrule
    
    \multirow{5.75}{*}{\rotatebox{90}{\makecell{LLaVA-Next\\13B}}} %LLaVA-v1.5-13B

    & SVD-LLM    
        & \accval{72.53\%} & \accval{72.24\%} & \accval{71.74\%} & \accval{71.15\%} & \accval{70.55\%} 
        & \accval{70.76\%} & \accval{70.63\%} & \accval{70.25\%} & \accval{69.96\%} & \accval{69.58\%} 
        & \accval{70.94\%}\\
    & QSVD-noQ   
        & \accval{71.94\%} & \accval{72.14\%} & \accval{71.74\%} & \accval{72.14\%} & \accval{71.79\%} 
        & \accval{71.23\%} & \accval{71.02\%} & \accval{71.06\%} & \accval{70.92\%} & \accval{70.40\%} 
        & \accval{71.44\%}\\
    & WSVD-noQ   
        & \accval{72.88\%} & \accval{72.98\%} & \accval{\textbf{73.57\%}} & \accval{\textbf{73.48\%}} & \accval{73.28\%}
        & \accval{71.29\%} & \accval{71.17\%} & \accval{71.25\%} & \accval{70.95\%} & \accval{70.81\%}
        & \accval{72.17\%}\\
    & \bc\textbf{LASER-noQ} 
        & \bc\accval{\textbf{73.13\%}} & \bc\accval{\textbf{73.03\%}} & \bc\accval{73.38\%} & \bc\accval{73.13\%} & \bc\accval{\textbf{73.67\%}}
        & \bc\accval{\textbf{71.39\%}} & \bc\accval{\textbf{71.40\%}} & \bc\accval{\textbf{71.33\%}} & \bc\accval{\textbf{71.37\%}} & \bc\accval{\textbf{71.25\%}}
        & \bc\accval{\textbf{72.31\%}}\\
    \cline{3-12}
    \rule{0pt}{2.75ex}
    & \textcolor{gray}{FP16} & \multicolumn{5}{c|}{\textcolor{gray}{Accuracy: 73.23\%}} & \multicolumn{5}{c|}{\textcolor{gray}{Accuracy: 71.30\%}} & \textcolor{gray}{\accval{72.27\%}}\\

    \bottomrule
    \end{tabular}
    }
    \label{tab:appx-svd-result}
    %\vspace{-10pt}
\end{table*}

The FP16 results in Tab.~\ref{tab:appx-svd-result} further support the robustness of LASER under a wide range
of low-rank compression ratios. Compared with SVD-LLM, QSVD-noQ, and WSVD-noQ,
LASER-noQ consistently preserves higher accuracy as the parameter-size ratio
$\rho_1$ decreases. The advantage is especially clear under aggressive
compression. For example, on SmolVLM-2B, several baselines degrade sharply at
$\rho_1=50\%$, while LASER-noQ still maintains strong performance on both
ScienceQA-IMG and SEED-Bench. Similar trends are observed on Qwen2-VL,
LLaVA-v1.5, and LLaVA-Next models, showing that the proposed loss-aware
decomposition is not specific to a single architecture.

\begin{table}[h]
%\vspace{-10pt}
\newcommand{\bc}{\cellcolor{blue!10}}
    \newcommand{\accvalours}[1]{\fontsize{9.75pt}{10pt}\selectfont \textbf{#1}}
\centering
\small
\caption{Accuracy evaluation of different methods on MMBench under FP16 low-rank compression.}
\resizebox{0.6\textwidth}{!}{%
\begin{tabular}{c|c|ccccc|c}
\toprule
\multirow{2}{*}{Acc.} & \multirow{2}{*}{Method}
& \multicolumn{5}{c|}{MMBench EN V1.1 $\uparrow$}
& \multirow{2}{*}{Avg. $\uparrow$} \\
\cmidrule(lr){3-7}
& & $\rho_1$: 90\% & $\rho_1$: 80\% & $\rho_1$: 70\% & $\rho_1$: 60\% & $\rho_1$: 50\% & \\
\midrule

\multirow{5.75}{*}{\rotatebox{90}{\makecell{SmolVLM\\2B}}}
& SVD-LLM
& \accval{44.31\%} & \accval{33.04\%} & \accval{15.74\%} & \accval{0.73\%} & \accval{0.31\%}
& \accval{18.83\%}\\
& QSVD-noQ
& \accval{58.39\%} & \accval{52.62\%} & \accval{35.79\%} & \accval{3.38\%} & \accval{0.88\%}
& \accval{30.21\%}\\
& WSVD-noQ
& \accval{59.06\%} & \accval{55.90\%} & \accval{51.22\%} & \accval{43.64\%} & \accval{24.52\%}
& \accval{46.87\%}\\
& \bc\textbf{LASER-noQ}
& \bc\accval{\textbf{64.31\%}} & \bc\accval{\textbf{63.79\%}} & \bc\accval{\textbf{62.96\%}} & \bc\accval{\textbf{61.56\%}} & \bc\accval{\textbf{61.09\%}}
& \bc\accval{\textbf{62.74\%}}\\
\cline{3-8}
\rule{0pt}{2.75ex}
& \textcolor{gray}{FP16}
& \multicolumn{5}{c|}{\textcolor{gray}{Accuracy: 64.47\%}}
& \textcolor{gray}{64.47\%}\\

\midrule

\multirow{5.75}{*}{\rotatebox{90}{\makecell{Qwen2-VL\\7B}}}
& SVD-LLM
& \accval{79.12\%} & \accval{78.29\%} & \accval{73.92\%} & \accval{50.23\%} & \accval{20.83\%}
& \accval{60.48\%}\\
& QSVD-noQ
& \accval{80.16\%} & \accval{79.79\%} & \accval{76.69\%} & \accval{70.06\%} & \accval{64.51\%}
& \accval{74.24\%}\\
& WSVD-noQ
& \accval{80.68\%} & \accval{80.00\%} & \accval{76.31\%} & \accval{67.27\%} & \accval{52.81\%}
& \accval{71.41\%}\\
& \bc\textbf{LASER-noQ}
& \bc\accval{\textbf{80.68\%}} & \bc\accval{\textbf{80.10\%}} & \bc\accval{\textbf{79.12\%}} & \bc\accval{\textbf{75.43\%}} & \bc\accval{\textbf{66.44\%}}
& \bc\accval{\textbf{76.35\%}}\\
\cline{3-8}
\rule{0pt}{2.75ex}
& \textcolor{gray}{FP16}
& \multicolumn{5}{c|}{\textcolor{gray}{Accuracy: 80.52\%}}
& \textcolor{gray}{80.52\%}\\

\midrule

\multirow{5.75}{*}{\rotatebox{90}{\makecell{LLaVA-v1.5\\7B}}}
& SVD-LLM
& \accval{55.22\%} & \accval{52.52\%} & \accval{47.01\%} & \accval{40.88\%} & \accval{31.06\%}
& \accval{45.34\%}\\
& QSVD-noQ
& \accval{61.25\%} & \accval{60.21\%} & \accval{60.17\%} & \accval{58.60\%} & \accval{56.56\%}
& \accval{59.36\%}\\
& WSVD-noQ
& \accval{60.82\%} & \accval{60.51\%} & \accval{59.94\%} & \accval{58.64\%} & \accval{56.61\%}
& \accval{59.30\%}\\
& \bc\textbf{LASER-noQ}
& \bc\accval{\textbf{61.51\%}} & \bc\accval{\textbf{61.45\%}} & \bc\accval{\textbf{60.88\%}} & \bc\accval{\textbf{60.00\%}} & \bc\accval{\textbf{57.87\%}}
& \bc\accval{\textbf{60.34\%}}\\
\cline{3-8}
\rule{0pt}{2.75ex}
& \textcolor{gray}{FP16}
& \multicolumn{5}{c|}{\textcolor{gray}{Accuracy: 61.56\%}}
& \textcolor{gray}{\accval{61.56\%}}\\

\midrule

\multirow{5.75}{*}{\rotatebox{90}{\makecell{LLaVA-Next\\7B}}}
& SVD-LLM
& \accval{62.18\%} & \accval{61.04\%} & \accval{58.34\%} & \accval{55.95\%} & \accval{49.82\%}
& \accval{57.47\%}\\
& QSVD-noQ
& \accval{64.83\%} & \accval{64.36\%} & \accval{63.64\%} & \accval{62.08\%} & \accval{59.79\%}
& \accval{62.94\%}\\
& WSVD-noQ
& \accval{64.84\%} & \accval{64.83\%} & \accval{64.04\%} & \accval{64.10\%} & \accval{63.21\%}
& \accval{64.20\%}\\
& \bc\textbf{LASER-noQ}
& \bc\accval{\textbf{64.99\%}} & \bc\accval{\textbf{65.09\%}} & \bc\accval{\textbf{64.88\%}} & \bc\accval{\textbf{64.94\%}} & \bc\accval{\textbf{64.26\%}}
& \bc\accval{\textbf{64.83\%}}\\
\cline{3-8}
\rule{0pt}{2.75ex}
& \textcolor{gray}{FP16}
& \multicolumn{5}{c|}{\textcolor{gray}{Accuracy: 65.14\%}}
& \textcolor{gray}{\accval{65.14\%}}\\

\midrule

\multirow{5.75}{*}{\rotatebox{90}{\makecell{LLaVA-Next\\13B}}}
& SVD-LLM
& \accval{65.09\%} & \accval{64.52\%} & \accval{61.40\%} & \accval{52.78\%} & \accval{44.05\%}
& \accval{57.57\%}\\
& QSVD-noQ
& \accval{67.27\%} & \accval{66.91\%} & \accval{66.86\%} & \accval{66.18\%} & \accval{63.95\%}
& \accval{66.23\%}\\
& WSVD-noQ
& \accval{67.48\%} & \accval{66.98\%} & \accval{66.86\%} & \accval{66.08\%} & \accval{65.14\%}
& \accval{66.51\%}\\
& \bc\textbf{LASER-noQ}
& \bc\accval{\textbf{67.58\%}} & \bc\accval{\textbf{67.38\%}} & \bc\accval{\textbf{67.12\%}} & \bc\accval{\textbf{66.91\%}} & \bc\accval{\textbf{66.44\%}}
& \bc\accval{\textbf{67.09\%}}\\
\cline{3-8}
\rule{0pt}{2.75ex}
& \textcolor{gray}{FP16}
& \multicolumn{5}{c|}{\textcolor{gray}{Accuracy: 67.53\%}}
& \textcolor{gray}{\accval{67.53\%}}\\

\bottomrule
\end{tabular}%
}
\label{tab:mmb-en-fp16}
%\vspace{-8pt}
\end{table}
The MMBench EN V1.1 results in Tab.~\ref{tab:mmb-en-fp16} show the same trend on a different evaluation
benchmark. LASER-noQ achieves the best average accuracy among low-rank methods
for all evaluated models. This indicates that the gains of LASER are not limited
to ScienceQA-IMG or SEED-Bench, but also transfer to a more general multimodal
reasoning benchmark. In particular, LASER-noQ remains close to the FP16 baseline
under mild compression and degrades more gracefully than prior SVD-based
methods under stronger compression.

\subsubsection{More Results in Quantized Settings}
\label{sec:appx-results-quant}
The low-precision results in Tab.~\ref{tab:appx-svd-result-quant} demonstrate that LASER remains effective when
low-rank compression is combined with quantization. LASER achieves the
highest average accuracy among all compressed low-precision methods, improving
over both quantization-only and SVD-based baselines. Compared with WSVD, LASER
uses a more loss-aware decomposition, which better
preserves accuracy after quantization. The remaining gap to FP16 is small,
suggesting that LASER provides a favorable trade-off between compression,
low-precision execution, and accuracy.

\begin{table}[h]
%\vspace{-12pt}
\centering
\small
\caption{Accuracy evaluation of different methods under low-precision on LLaVA-v1.5 and LLaVA-Next models.}
\label{tab:appx-svd-result-quant}
\resizebox{\textwidth}{!}{%
\begin{tabular}{l|ccc|ccc|ccc|c}
\toprule
\multirow{2}{*}{Method}
& \multicolumn{3}{c|}{ScienceQA-IMG $\uparrow$}
& \multicolumn{3}{c|}{SEED-Bench-IMG $\uparrow$}
& \multicolumn{3}{c|}{MMBench EN V1.1 $\uparrow$}
& \multirow{2}{*}{Avg. $\uparrow$} \\
\cmidrule(lr){2-4}
\cmidrule(lr){5-7}
\cmidrule(lr){8-10}
& v1.5 7B & Next 7B & Next 13B
& v1.5 7B & Next 7B & Next 13B
& v1.5 7B & Next 7B & Next 13B
& \\
\midrule
DuQuant
& 57.36\% & 66.34\% & 70.20\%
& 54.11\% & 63.64\% & 66.15\%
& 55.91\% & \textbf{62.78\%} & 64.31\%
& 62.31\% \\

QVLM
& 55.24\% & 60.60\% & 65.28\%
& 50.13\% & 50.38\% & 65.39\%
& 57.15\% & 60.29\% & 63.06\%
& 58.61\% \\

QSVD
& 65.61\% & 66.10\% & 70.43\%
& 58.49\% & 65.63\% & 69.21\%
& 45.45\% & 50.96\% & 45.56\%
& 59.72\% \\

WSVD
& 64.25\% & 66.94\% & \textbf{73.08\%}
& 60.23\% & 67.49\% & 70.67\%
& 56.57\% & 60.56\% & 64.68\%
& 64.94\% \\

\rowcolor{blue!10}
\textbf{LASER}
& \textbf{66.14\%} & \textbf{68.37\%} & 72.11\%
& \textbf{61.94\%} & \textbf{67.49\%} & \textbf{71.56\%}
& \textbf{58.66\%} & 61.61\% & \textbf{65.08\%}
& \textbf{65.88\%} \\

\midrule
\textcolor{gray}{FP16}
& \textcolor{gray}{68.10\%} & \textcolor{gray}{69.60\%} & \textcolor{gray}{73.23\%}
& \textcolor{gray}{60.18\%} & \textcolor{gray}{69.02\%} & \textcolor{gray}{71.30\%}
& \textcolor{gray}{61.56\%} & \textcolor{gray}{65.14\%} & \textcolor{gray}{67.53\%}
& \textcolor{gray}{67.30\%} \\
\bottomrule
\end{tabular}%
}
%\vspace{-12pt}
\end{table}

% \newpage
\subsubsection{More Results for Ablation Studies}
\label{sec:appx-results-ablation}

\paragraph{Effectiveness of Quantization-aware Whitening}

\begin{wraptable}{r}{0.48\textwidth}
\vspace{-20pt}
\centering
\small
\caption{Results of quantization-aware whitening ablation.}
%\vspace{-5pt}
\resizebox{\linewidth}{!}{ %
\begin{tabular}{c|c|c|c|c}
\toprule
\multirow{2}{*}{Method} & \multicolumn{3}{c|}{ScienceQA-IMG $\uparrow$} & \multirow{2}{*}{Avg. $\uparrow$} \\
\cmidrule{2-4}
& v1.5 7B & Next 7B & Next 13B & \\
\midrule
\textcolor{gray}{FP16}
& \textcolor{gray}{\accval{68.10\%}} 
& \textcolor{gray}{\accval{69.60\%}} 
& \textcolor{gray}{\accval{73.23\%}} 
& \textcolor{gray}{\accval{70.31\%}} \\

W/o QAW
& \accval{65.70\%} 
& \accval{68.07\%} 
& \accval{71.79\%} 
& \accval{68.52\%} \\

LASER
& \accval{\textbf{66.14\%}} 
& \accval{\textbf{68.34\%}} 
& \accval{\textbf{72.11\%}} 
& \accval{\textbf{68.86\%}} \\
\bottomrule
\end{tabular}
}
\label{tab:quantization-aware-whitening-ablation}
%\vspace{-20pt}
\end{wraptable}

We evaluate quantization-aware whitening under the same W8A8 setting and compression ratio. The \textit{W/o QAW} baseline removes this component while keeping the rest of LASER unchanged. As shown in Tab.~\ref{tab:quantization-aware-whitening-ablation}, QAW consistently improves accuracy across all models, raising the average from $68.52\%$ to $68.86\%$. This indicates that aligning the SVD transformation with the subsequent quantized execution helps preserve accuracy under low-precision compression.

%We evaluate the effectiveness of quantization-aware whitening under the same W8A8 setting and compression ratio. The \textit{W/o QAW} baseline removes quantization-aware whitening while keeping the remaining LASER pipeline unchanged. As shown in Tab.~\ref{tab:quantization-aware-whitening-ablation}, quantization-aware whitening consistently improves accuracy across all evaluated models, increasing the average accuracy from 69.07\% to 69.51\%. This indicates that aligning the SVD transformation with the subsequent quantized execution helps preserve accuracy under low-precision compression.

\paragraph{Effectiveness of SVD-aware Permutation}

\begin{wraptable}{r}{0.48\textwidth}
% \vspace{-5pt}
\centering
\small
\caption{Results of SVD-aware permutation ablation.}
%\vspace{-5pt}
\resizebox{\linewidth}{!}{ %
\begin{tabular}{c|c|c|c|c}
\toprule
\multirow{2}{*}{Method} & \multicolumn{3}{c|}{ScienceQA-IMG $\uparrow$} & \multirow{2}{*}{Avg. $\uparrow$} \\
\cmidrule{2-4}
& v1.5 7B & Next 7B & Next 13B & \\
\midrule
\textcolor{gray}{FP16}
& \textcolor{gray}{\accval{68.10\%}} 
& \textcolor{gray}{\accval{69.60\%}} 
& \textcolor{gray}{\accval{73.23\%}} 
& \textcolor{gray}{\accval{70.31\%}} \\

W/o Perm.
& \accval{58.80\%} 
& \accval{50.37\%} 
& \accval{67.92\%} 
& \accval{59.03\%} \\

LASER
& \accval{\textbf{66.14\%}} 
& \accval{\textbf{68.34\%}} 
& \accval{\textbf{72.11\%}} 
& \accval{\textbf{68.86\%}} \\
\bottomrule
\end{tabular}
}
\label{tab:svd-aware-permutation-ablation}
%\vspace{-5pt}
\end{wraptable}

We evaluate the effectiveness of the SVD-aware permutation in the FFN (Sec.~\ref{sec:mix-quant-svd}) under the same setting as above. The \textit{W/o Perm.} baseline uses the same FFN SVD ratio as LASER but does not group SVD-friendly channels before factorization. As shown in Tab.~\ref{tab:svd-aware-permutation-ablation}, removing this permutation reduces the average accuracy from $68.86\%$ to $59.03\%$, with a particularly large drop on LLaVA-Next 7B from $68.34\%$ to $50.37\%$. This shows that SVD-aware permutation effectively groups FFN channels that are more suitable for low-rank factorization, thereby preserving accuracy under FFN compression.

%%%%%%%%%%%%%%%%%%%%%%%%%%%%%%%%%%%%%%%%%%%%%%%%%%%%%%%%%%%%

\newpage

\section*{NeurIPS Paper Checklist}

\begin{enumerate}

\item {\bf Claims}
    \item[] Question: Do the main claims made in the abstract and introduction accurately reflect the paper's contributions and scope?
    \item[] Answer: \answerYes{} % Replace by \answerYes{}, \answerNo{}, or \answerNA{}.
    \item[] Justification: The abstract and introduction state the paper's contributions and scope as a novel loss-aware low-rank approximation framework for efficient VLM inference. The claimed accuracy and efficiency improvements are supported by the theoretical analysis, main experiments, and ablation studies.
    \item[] Guidelines:
    \begin{itemize}
        \item The answer \answerNA{} means that the abstract and introduction do not include the claims made in the paper.
        \item The abstract and/or introduction should clearly state the claims made, including the contributions made in the paper and important assumptions and limitations. A \answerNo{} or \answerNA{} answer to this question will not be perceived well by the reviewers. 
        \item The claims made should match theoretical and experimental results, and reflect how much the results can be expected to generalize to other settings. 
        \item It is fine to include aspirational goals as motivation as long as it is clear that these goals are not attained by the paper. 
    \end{itemize}

\item {\bf Limitations}
    \item[] Question: Does the paper discuss the limitations of the work performed by the authors?
    \item[] Answer: \answerYes{} % Replace by \answerYes{}, \answerNo{}, or \answerNA{}.
    \item[] Justification: We discuss the limitation in the last section of the paper.
    \item[] Guidelines:
    \begin{itemize}
        \item The answer \answerNA{} means that the paper has no limitation while the answer \answerNo{} means that the paper has limitations, but those are not discussed in the paper. 
        \item The authors are encouraged to create a separate ``Limitations'' section in their paper.
        \item The paper should point out any strong assumptions and how robust the results are to violations of these assumptions (e.g., independence assumptions, noiseless settings, model well-specification, asymptotic approximations only holding locally). The authors should reflect on how these assumptions might be violated in practice and what the implications would be.
        \item The authors should reflect on the scope of the claims made, e.g., if the approach was only tested on a few datasets or with a few runs. In general, empirical results often depend on implicit assumptions, which should be articulated.
        \item The authors should reflect on the factors that influence the performance of the approach. For example, a facial recognition algorithm may perform poorly when image resolution is low or images are taken in low lighting. Or a speech-to-text system might not be used reliably to provide closed captions for online lectures because it fails to handle technical jargon.
        \item The authors should discuss the computational efficiency of the proposed algorithms and how they scale with dataset size.
        \item If applicable, the authors should discuss possible limitations of their approach to address problems of privacy and fairness.
        \item While the authors might fear that complete honesty about limitations might be used by reviewers as grounds for rejection, a worse outcome might be that reviewers discover limitations that aren't acknowledged in the paper. The authors should use their best judgment and recognize that individual actions in favor of transparency play an important role in developing norms that preserve the integrity of the community. Reviewers will be specifically instructed to not penalize honesty concerning limitations.
    \end{itemize}

\item {\bf Theory assumptions and proofs}
    \item[] Question: For each theoretical result, does the paper provide the full set of assumptions and a complete (and correct) proof?
    \item[] Answer: \answerYes{} % Replace by \answerYes{}, \answerNo{}, or \answerNA{}.
    \item[] Justification:  The paper states the assumptions used in the loss-aware lo-rank approximation framework and provides the corresponding derivations and proofs in the main content and appendix.
    \item[] Guidelines:
    \begin{itemize}
        \item The answer \answerNA{} means that the paper does not include theoretical results. 
        \item All the theorems, formulas, and proofs in the paper should be numbered and cross-referenced.
        \item All assumptions should be clearly stated or referenced in the statement of any theorems.
        \item The proofs can either appear in the main paper or the supplemental material, but if they appear in the supplemental material, the authors are encouraged to provide a short proof sketch to provide intuition. 
        \item Inversely, any informal proof provided in the core of the paper should be complemented by formal proofs provided in appendix or supplemental material.
        \item Theorems and Lemmas that the proof relies upon should be properly referenced. 
    \end{itemize}

    \item {\bf Experimental result reproducibility}
    \item[] Question: Does the paper fully disclose all the information needed to reproduce the main experimental results of the paper to the extent that it affects the main claims and/or conclusions of the paper (regardless of whether the code and data are provided or not)?
    \item[] Answer: \answerYes{} % Replace by \answerYes{}, \answerNo{}, or \answerNA{}.
    \item[] Justification: Yes, the paper discloses sufficient information to reproduce its main results. The methods, implementation details, and evaluation setup, including models, datasets, and calibration procedures, are well-documented.
    \item[] Guidelines:
    \begin{itemize}
        \item The answer \answerNA{} means that the paper does not include experiments.
        \item If the paper includes experiments, a \answerNo{} answer to this question will not be perceived well by the reviewers: Making the paper reproducible is important, regardless of whether the code and data are provided or not.
        \item If the contribution is a dataset and\slash or model, the authors should describe the steps taken to make their results reproducible or verifiable. 
        \item Depending on the contribution, reproducibility can be accomplished in various ways. For example, if the contribution is a novel architecture, describing the architecture fully might suffice, or if the contribution is a specific model and empirical evaluation, it may be necessary to either make it possible for others to replicate the model with the same dataset, or provide access to the model. In general. releasing code and data is often one good way to accomplish this, but reproducibility can also be provided via detailed instructions for how to replicate the results, access to a hosted model (e.g., in the case of a large language model), releasing of a model checkpoint, or other means that are appropriate to the research performed.
        \item While NeurIPS does not require releasing code, the conference does require all submissions to provide some reasonable avenue for reproducibility, which may depend on the nature of the contribution. For example
        \begin{enumerate}
            \item If the contribution is primarily a new algorithm, the paper should make it clear how to reproduce that algorithm.
            \item If the contribution is primarily a new model architecture, the paper should describe the architecture clearly and fully.
            \item If the contribution is a new model (e.g., a large language model), then there should either be a way to access this model for reproducing the results or a way to reproduce the model (e.g., with an open-source dataset or instructions for how to construct the dataset).
            \item We recognize that reproducibility may be tricky in some cases, in which case authors are welcome to describe the particular way they provide for reproducibility. In the case of closed-source models, it may be that access to the model is limited in some way (e.g., to registered users), but it should be possible for other researchers to have some path to reproducing or verifying the results.
        \end{enumerate}
    \end{itemize}

\item {\bf Open access to data and code}
    \item[] Question: Does the paper provide open access to the data and code, with sufficient instructions to faithfully reproduce the main experimental results, as described in supplemental material?
    \item[] Answer: \answerYes{} % Replace by \answerYes{}, \answerNo{}, or \answerNA{}.
    \item[] Justification:  The experiments use publicly available models and benchmarks, and the paper provides detailed experimental settings and implementation details. The code is not included at submission time but will be released upon acceptance.
    \item[] Guidelines:
    \begin{itemize}
        \item The answer \answerNA{} means that paper does not include experiments requiring code.
        \item Please see the NeurIPS code and data submission guidelines (\url{https://neurips.cc/public/guides/CodeSubmissionPolicy}) for more details.
        \item While we encourage the release of code and data, we understand that this might not be possible, so \answerNo{} is an acceptable answer. Papers cannot be rejected simply for not including code, unless this is central to the contribution (e.g., for a new open-source benchmark).
        \item The instructions should contain the exact command and environment needed to run to reproduce the results. See the NeurIPS code and data submission guidelines (\url{https://neurips.cc/public/guides/CodeSubmissionPolicy}) for more details.
        \item The authors should provide instructions on data access and preparation, including how to access the raw data, preprocessed data, intermediate data, and generated data, etc.
        \item The authors should provide scripts to reproduce all experimental results for the new proposed method and baselines. If only a subset of experiments are reproducible, they should state which ones are omitted from the script and why.
        \item At submission time, to preserve anonymity, the authors should release anonymized versions (if applicable).
        \item Providing as much information as possible in supplemental material (appended to the paper) is recommended, but including URLs to data and code is permitted.
    \end{itemize}

\item {\bf Experimental setting/details}
    \item[] Question: Does the paper specify all the training and test details (e.g., data splits, hyperparameters, how they were chosen, type of optimizer) necessary to understand the results?
    \item[] Answer: \answerYes{} % Replace by \answerYes{}, \answerNo{}, or \answerNA{}.
    \item[] Justification: The paper specifies the models, benchmark datasets, calibration settings, compression ratios, rank-allocation settings, quantization settings, and hardware/kernel configurations used in the experiments.
    \item[] Guidelines:
    \begin{itemize}
        \item The answer \answerNA{} means that the paper does not include experiments.
        \item The experimental setting should be presented in the core of the paper to a level of detail that is necessary to appreciate the results and make sense of them.
        \item The full details can be provided either with the code, in appendix, or as supplemental material.
    \end{itemize}

\item {\bf Experiment statistical significance}
    \item[] Question: Does the paper report error bars suitably and correctly defined or other appropriate information about the statistical significance of the experiments?
    \item[] Answer: \answerYes{} % Replace by \answerYes{}, \answerNo{}, or \answerNA{}.
    \item[] Justification: Yes, we report the average results over 5 random seeds in all evaluations. And we follow the open-source VLM evaluation toolkit to report the results.
    \item[] Guidelines:
    \begin{itemize}
        \item The answer \answerNA{} means that the paper does not include experiments.
        \item The authors should answer \answerYes{} if the results are accompanied by error bars, confidence intervals, or statistical significance tests, at least for the experiments that support the main claims of the paper.
        \item The factors of variability that the error bars are capturing should be clearly stated (for example, train/test split, initialization, random drawing of some parameter, or overall run with given experimental conditions).
        \item The method for calculating the error bars should be explained (closed form formula, call to a library function, bootstrap, etc.)
        \item The assumptions made should be given (e.g., Normally distributed errors).
        \item It should be clear whether the error bar is the standard deviation or the standard error of the mean.
        \item It is OK to report 1-sigma error bars, but one should state it. The authors should preferably report a 2-sigma error bar than state that they have a 96\% CI, if the hypothesis of Normality of errors is not verified.
        \item For asymmetric distributions, the authors should be careful not to show in tables or figures symmetric error bars that would yield results that are out of range (e.g., negative error rates).
        \item If error bars are reported in tables or plots, the authors should explain in the text how they were calculated and reference the corresponding figures or tables in the text.
    \end{itemize}

\item {\bf Experiments compute resources}
    \item[] Question: For each experiment, does the paper provide sufficient information on the computer resources (type of compute workers, memory, time of execution) needed to reproduce the experiments?
    \item[] Answer: \answerYes{}% Replace by \answerYes{}, \answerNo{}, or \answerNA{}.
    \item[] Justification: All the details, type of compute workers, memory, time of execution, are provided in the paper.
    \item[] Guidelines:
    \begin{itemize}
        \item The answer \answerNA{} means that the paper does not include experiments.
        \item The paper should indicate the type of compute workers CPU or GPU, internal cluster, or cloud provider, including relevant memory and storage.
        \item The paper should provide the amount of compute required for each of the individual experimental runs as well as estimate the total compute. 
        \item The paper should disclose whether the full research project required more compute than the experiments reported in the paper (e.g., preliminary or failed experiments that didn't make it into the paper). 
    \end{itemize}
    
\item {\bf Code of ethics}
    \item[] Question: Does the research conducted in the paper conform, in every respect, with the NeurIPS Code of Ethics \url{https://neurips.cc/public/EthicsGuidelines}?
    \item[] Answer: \answerYes{} % Replace by \answerYes{}, \answerNo{}, or \answerNA{}.
    \item[] Justification: We have read and acknowledge the NeurIPS Code of Ethics.
    \item[] Guidelines:
    \begin{itemize}
        \item The answer \answerNA{} means that the authors have not reviewed the NeurIPS Code of Ethics.
        \item If the authors answer \answerNo, they should explain the special circumstances that require a deviation from the Code of Ethics.
        \item The authors should make sure to preserve anonymity (e.g., if there is a special consideration due to laws or regulations in their jurisdiction).
    \end{itemize}

\item {\bf Broader impacts}
    \item[] Question: Does the paper discuss both potential positive societal impacts and negative societal impacts of the work performed?
    \item[] Answer: \answerYes{} % Replace by \answerYes{}, \answerNo{}, or \answerNA{}.
    \item[] Justification: The paper discusses positive impacts such as reducing inference cost and improving accessibility of efficient VLM deployment, as well as potential risks that more efficient VLMs may also make misuse easier.
    \item[] Guidelines:
    \begin{itemize}
        \item The answer \answerNA{} means that there is no societal impact of the work performed.
        \item If the authors answer \answerNA{} or \answerNo, they should explain why their work has no societal impact or why the paper does not address societal impact.
        \item Examples of negative societal impacts include potential malicious or unintended uses (e.g., disinformation, generating fake profiles, surveillance), fairness considerations (e.g., deployment of technologies that could make decisions that unfairly impact specific groups), privacy considerations, and security considerations.
        \item The conference expects that many papers will be foundational research and not tied to particular applications, let alone deployments. However, if there is a direct path to any negative applications, the authors should point it out. For example, it is legitimate to point out that an improvement in the quality of generative models could be used to generate Deepfakes for disinformation. On the other hand, it is not needed to point out that a generic algorithm for optimizing neural networks could enable people to train models that generate Deepfakes faster.
        \item The authors should consider possible harms that could arise when the technology is being used as intended and functioning correctly, harms that could arise when the technology is being used as intended but gives incorrect results, and harms following from (intentional or unintentional) misuse of the technology.
        \item If there are negative societal impacts, the authors could also discuss possible mitigation strategies (e.g., gated release of models, providing defenses in addition to attacks, mechanisms for monitoring misuse, mechanisms to monitor how a system learns from feedback over time, improving the efficiency and accessibility of ML).
    \end{itemize}
    
\item {\bf Safeguards}
    \item[] Question: Does the paper describe safeguards that have been put in place for responsible release of data or models that have a high risk for misuse (e.g., pre-trained language models, image generators, or scraped datasets)?
    \item[] Answer: \answerNA{} % Replace by \answerYes{}, \answerNo{}, or \answerNA{}.
    \item[] Justification: The paper does not release a new high-risk pretrained model, image generator, or scraped dataset. It proposes a compression method evaluated on existing public models, and any use should follow the original models' licenses and usage restrictions.
    \item[] Guidelines:
    \begin{itemize}
        \item The answer \answerNA{} means that the paper poses no such risks.
        \item Released models that have a high risk for misuse or dual-use should be released with necessary safeguards to allow for controlled use of the model, for example by requiring that users adhere to usage guidelines or restrictions to access the model or implementing safety filters. 
        \item Datasets that have been scraped from the Internet could pose safety risks. The authors should describe how they avoided releasing unsafe images.
        \item We recognize that providing effective safeguards is challenging, and many papers do not require this, but we encourage authors to take this into account and make a best faith effort.
    \end{itemize}

\item {\bf Licenses for existing assets}
    \item[] Question: Are the creators or original owners of assets (e.g., code, data, models), used in the paper, properly credited and are the license and terms of use explicitly mentioned and properly respected?
    \item[] Answer: \answerYes{} % Replace by \answerYes{}, \answerNo{}, or \answerNA{}.
    \item[] Justification: The paper cites the existing models, datasets, benchmarks, and codebases used in the experiments, and describes their versions, access sources, and licenses or terms of use where available.
    \item[] Guidelines:
    \begin{itemize}
        \item The answer \answerNA{} means that the paper does not use existing assets.
        \item The authors should cite the original paper that produced the code package or dataset.
        \item The authors should state which version of the asset is used and, if possible, include a URL.
        \item The name of the license (e.g., CC-BY 4.0) should be included for each asset.
        \item For scraped data from a particular source (e.g., website), the copyright and terms of service of that source should be provided.
        \item If assets are released, the license, copyright information, and terms of use in the package should be provided. For popular datasets, \url{paperswithcode.com/datasets} has curated licenses for some datasets. Their licensing guide can help determine the license of a dataset.
        \item For existing datasets that are re-packaged, both the original license and the license of the derived asset (if it has changed) should be provided.
        \item If this information is not available online, the authors are encouraged to reach out to the asset's creators.
    \end{itemize}

\item {\bf New assets}
    \item[] Question: Are new assets introduced in the paper well documented and is the documentation provided alongside the assets?
    \item[] Answer: \answerNA{} % Replace by \answerYes{}, \answerNo{}, or \answerNA{}.
    \item[] Justification: Our paper does not release new models or datasets.
    \item[] Guidelines:
    \begin{itemize}
        \item The answer \answerNA{} means that the paper does not release new assets.
        \item Researchers should communicate the details of the dataset\slash code\slash model as part of their submissions via structured templates. This includes details about training, license, limitations, etc. 
        \item The paper should discuss whether and how consent was obtained from people whose asset is used.
        \item At submission time, remember to anonymize your assets (if applicable). You can either create an anonymized URL or include an anonymized zip file.
    \end{itemize}

\item {\bf Crowdsourcing and research with human subjects}
    \item[] Question: For crowdsourcing experiments and research with human subjects, does the paper include the full text of instructions given to participants and screenshots, if applicable, as well as details about compensation (if any)? 
    \item[] Answer: \answerNA{} % Replace by \answerYes{}, \answerNo{}, or \answerNA{}.
    \item[] Justification: The paper does not involve crowdsourcing experiments or research with human subjects. 
    \item[] Guidelines:
    \begin{itemize}
        \item The answer \answerNA{} means that the paper does not involve crowdsourcing nor research with human subjects.
        \item Including this information in the supplemental material is fine, but if the main contribution of the paper involves human subjects, then as much detail as possible should be included in the main paper. 
        \item According to the NeurIPS Code of Ethics, workers involved in data collection, curation, or other labor should be paid at least the minimum wage in the country of the data collector. 
    \end{itemize}

\item {\bf Institutional review board (IRB) approvals or equivalent for research with human subjects}
    \item[] Question: Does the paper describe potential risks incurred by study participants, whether such risks were disclosed to the subjects, and whether Institutional Review Board (IRB) approvals (or an equivalent approval/review based on the requirements of your country or institution) were obtained?
    \item[] Answer: \answerNA{} % Replace by \answerYes{}, \answerNo{}, or \answerNA{}.
    \item[] Justification: The paper does not involve human-subject studies, crowdsourcing experiments, or collection of data from participants, so IRB approval or equivalent review is not applicable.
    \item[] Guidelines:
    \begin{itemize}
        \item The answer \answerNA{} means that the paper does not involve crowdsourcing nor research with human subjects.
        \item Depending on the country in which research is conducted, IRB approval (or equivalent) may be required for any human subjects research. If you obtained IRB approval, you should clearly state this in the paper. 
        \item We recognize that the procedures for this may vary significantly between institutions and locations, and we expect authors to adhere to the NeurIPS Code of Ethics and the guidelines for their institution. 
        \item For initial submissions, do not include any information that would break anonymity (if applicable), such as the institution conducting the review.
    \end{itemize}

\item {\bf Declaration of LLM usage}
    \item[] Question: Does the paper describe the usage of LLMs if it is an important, original, or non-standard component of the core methods in this research? Note that if the LLM is used only for writing, editing, or formatting purposes and does \emph{not} impact the core methodology, scientific rigor, or originality of the research, declaration is not required.
    %this research? 
    \item[] Answer: \answerNA{} % Replace by \answerYes{}, \answerNo{}, or \answerNA{}.
    \item[] Justification: LLMs were used only for writing, editing, and formatting assistance, and did not affect the core methodology, experiments, scientific rigor, or originality of the research.
    \item[] Guidelines:
    \begin{itemize}
        \item The answer \answerNA{} means that the core method development in this research does not involve LLMs as any important, original, or non-standard components.
        \item Please refer to our LLM policy in the NeurIPS handbook for what should or should not be described.
    \end{itemize}

\end{enumerate}

\end{document}